\documentclass[11pt,letterpaper]{article}

\usepackage[margin=1in]{geometry}        
\usepackage{setspace}                     
\usepackage[T1]{fontenc}                  
\usepackage[utf8]{inputenc}               
\usepackage{lmodern} 
\usepackage{graphicx}
\usepackage{natbib}
\usepackage{authblk}
\usepackage{enumerate}
\usepackage{enumitem}
\usepackage{amsmath,amssymb,amsthm,mathtools,bbm}

\numberwithin{equation}{section}

\theoremstyle{plain}
\newtheorem{theorem}{Theorem}[section]
\newtheorem{lemma}[theorem]{Lemma}

\newtheorem{corollary}[theorem]{Corollary}

\theoremstyle{definition}
\newtheorem{definition}{Definition}

\usepackage[ruled, linesnumbered]{algorithm2e}
\usepackage{xcolor}

\usepackage[
  colorlinks,
  linkcolor=blue,
  citecolor=blue,
  urlcolor=blue,
  breaklinks
]{hyperref}

\newcommand{\N}{\mathbb{N}} 
\def \iidsim {{\overset{iid}{\sim}\,}}

\def\Dcal {{\mathcal{D}}}
\def\Fcal {{\mathcal{F}}}
\def\E {{\mathbb{E}}}
\def\P {{\mathbb{P}}}

\def \mle {{\,\mathrm{MLE}}}
\newcommand{\One}[1]{{\mathbbm{1}}\left\{{#1}\right\}}

\newcommand{\bQ}{\mathbb{Q}}

\newcommand{\hel}{\textnormal{d}_{\textnormal{H}}}

\renewcommand{\d}{\mathrm{d}}

\DeclareMathOperator*{\argmax}{argmax}

\newcommand{\papertitle}{Online monotone density estimation and log-optimal calibration}
\newcommand{\paperauthorA}{Rohan Hore}
\newcommand{\paperauthorB}{Ruodu Wang}
\newcommand{\paperauthorC}{Aaditya Ramdas}
\newcommand{\affilOne}{Department of Statistics and Data Science, Carnegie Mellon University, USA}
\newcommand{\affilTwo}{Department of Statistics and Actuarial Science, University of Waterloo, Canada}

\newcommand{\corrEmail}{rhore@andrew.cmu.edu}

\title{\papertitle}

\author[1]{\paperauthorA\thanks{Corresponding author: \corrEmail}}
\author[2]{\paperauthorB}
\author[1]{\paperauthorC}
\affil[1]{\affilOne}
\affil[2]{\affilTwo}

\date{\today}

\newcommand{\paperabstract}[1]{%
  \begin{abstract}
    #1
  \end{abstract}
}

\usepackage[nameinlink,noabbrev]{cleveref}
\begin{document}

\maketitle

\paperabstract{%
We study the problem of online monotone density estimation, where density estimators must be constructed in a predictable manner from sequentially observed data. We propose two online estimators: an online analogue of the classical Grenander estimator, and an expert aggregation estimator inspired by exponential weighting methods from the online learning literature. In the well-specified stochastic setting, where the underlying density is monotone, we show that the expected cumulative log-likelihood gap between the online estimators and the true density admits an $O(n^{1/3})$ bound. We further establish a $\sqrt{n\log{n}}$ pathwise regret bound for the expert aggregation estimator relative to the best offline monotone estimator chosen in hindsight, under minimal regularity assumptions on the observed sequence.
As an application of independent interest, we show that the problem of constructing log-optimal p-to-e calibrators for sequential hypothesis testing can be formulated as an online monotone density estimation problem. We adapt the proposed estimators to build empirically adaptive p-to-e calibrators and establish their optimality. Numerical experiments illustrate the theoretical results.
}

\section{Introduction}
In this work, we study the problem of online monotone density estimation. Let $\Dcal$ be the class of non-increasing densities supported on $[0,1]$, defined by
\begin{equation}\label{eqn:Dcal}
    \Dcal:=\biggl\{f:[0,1]\to[0,\infty):\int f(u)\,\d u=1,\; f(x)\le f(y)\ \textnormal{for all}\ x\ge y\biggr\}.
\end{equation}
Suppose $X_1,X_2,\ldots\iidsim\bQ$ are observations drawn sequentially, where $\bQ$ admits a density $q=\frac{\d\bQ}{\d\lambda}\in \Dcal$. Throughout this paper, $\lambda$ is the Lebesgue measure on $[0,1]$. Our aim is to construct an \emph{algorithm} $\hat f$, by which we mean a predictable sequence of density estimators $\hat f=(\hat f_t)_{t\ge1}$. Formally, at each time $t\ge2$, the algorithm outputs a density $\hat f_t$, a measurable function of the past observations $(X_1,\ldots,X_{t-1})$, that is, $\hat f_t$ is $\mathcal F(X_1,\ldots,X_{t-1})$-measurable, where $\mathcal F(X_1,\ldots,X_{t-1})$ denotes the $\sigma$-algebra generated by $(X_1,\ldots,X_{t-1})$.

Restricting attention to densities supported on a compact interval is standard in monotone density estimation and is often essential for meaningful risk bounds; see, for example, \citep[Section~1]{gao2009rate}. While in this paper, we assume densities are supported on $[0,1]$, any compact interval can be studied by suitable translation and rescaling.

\subsection{Risk and regret for online monotone density estimation}
We measure the performance of an online algorithm $\hat{f}$ through its sequential log-likelihood. In particular, for a predictable density $f\in\Dcal$, we define $\ell_t(f):=-\log f(X_t)$, the negative log-likelihood of the observation $X_t$, as the instantaneous loss at time $t$. Accordingly, the cumulative loss incurred by an online algorithm $\hat f$ up to time $n$ is then given by
\[
\mathcal L(\hat f,n)=\sum_{t=1}^n \ell_t(\hat f_t)=-\sum_{t=1}^n \log \hat f_t(X_t).
\]
For any fixed
%static estimator
$f$, we will write $\mathcal L(f,n)$ to denote $-\sum_{t=1}^n \log f(X_t)$. We study this loss under both stochastic and adversarial regimes, comparing the online algorithm with the population-level and hindsight-optimal benchmarks.

\paragraph{Stochastic well-specified setting.}
When $X_1,X_2,\ldots$ are iid observations from $\bQ$, with density $q\in\Dcal$, the true density $q$ is Bayes-optimal under log-loss. This motivates the following notion of \emph{excess KL-risk},
\begin{equation}\label{eq:risk}
    \mathrm{Risk}_\bQ(\hat f;n)=\E_\bQ\bigl[\mathcal L(\hat f,n)\bigr]
-\min_{f\in\Dcal}\E_\bQ\bigl[\mathcal L(f,n)\bigr]
=\E_\bQ\bigl[\mathcal L(\hat f,n)\bigr]-\E_\bQ\bigl[\mathcal L(q,n)\bigr].
\end{equation}
For any online algorithm $\hat f$, since $\hat f_s$ is $\mathcal F_{s-1}$-measurable, by the tower property, we can express excess KL-risk equivalently as $\sum_{s=1}^n\E\bigl[\mathrm{KL}(q\|\hat f_s)\bigr]$.

\paragraph{Adversarial setting.}
We also compare an online algorithm against the best monotone density that could be selected offline with full access to the data up to a fixed time $n\ge 1$, in order to assess how well the online procedures track a hindsight-optimal benchmark. For
$0\le a\le b\le\infty$, let
\[
\Dcal_{a,b}:=\bigl\{f\in\Dcal:a\le f(u)\le b \ \textnormal{for all } u\in[0,1]\bigr\}
\]
denote the subclass of $\Dcal$ consisting of densities that are uniformly bounded below and above by $a$ and $b$, respectively. A natural offline benchmark is given by the solution of the maximum likelihood problem constrained to $\Dcal_{a,b}$,
\begin{equation}\label{eq:constrained_offline_grenander}
\tilde f^{\mle}_{n,a,b}\in\argmax_{f\in\Dcal_{a,b}}\sum_{i=1}^n \log f(X_i).
\end{equation}
The unconstrained case $a=0$ and $b=\infty$ was originally introduced by \citet{grenander1956theory} and is known as the Grenander density estimator in the literature; therefore, we refer to \eqref{eq:constrained_offline_grenander} as the \emph{constrained Grenander estimator}. 
Comparing an online algorithm $\hat f$ to this benchmark leads to:
\begin{equation}\label{eq:regret}
    \mathrm{Regret}(\hat f;n,a,b):=\mathcal L(\hat f,n)-\min_{f\in\Dcal_{a,b}}\mathcal L(f,n),
\end{equation}
which provides a pathwise performance measure without imposing any distributional assumptions.

For theoretical analysis, we will often restrict attention to the bounded class $\Dcal_{a,b}$. In practice, however, one may take $a=0$ and $b=\infty$, recovering the classical Grenander estimator. Ideally, a well-designed online procedure should adapt automatically to the effective range of the density, without requiring prior knowledge of such bounds. Empirically, we indeed observe this behavior for all online procedures developed in this work.

\subsection{The Grenander estimator and its properties}

\citet{grenander1956theory} characterized the Grenander estimator as the left-continuous density corresponding to the least concave majorant of the empirical distribution function. Consequently, the Grenander estimator is simply a monotone histogram that is strictly positive on $[0,\max_{1\le i\le n} X_i]$. Throughout this work, we extend the estimator beyond $\max_{1\le i\le n} X_i$ by a right-continuous constant equal to the final histogram height and suitably renormalize it, thereby ensuring positivity and finite log-loss evaluations. Since the foundational work of \citet{grenander1956theory}, the theoretical properties of the Grenander estimator have been studied extensively; see \citet{durot2012limit} for a comprehensive historical overview.

Early work by \citet{rao1969estimation} and \citet{carolan1999asymptotic} characterized the asymptotic distribution of $\tilde f^{\mathrm{MLE}}_n(x)$ in regions where the true density is locally strictly decreasing and locally flat, respectively. Later studies established sharp global convergence rates under various loss functions. In particular, convergence at rate $n^{-1/3}$ in Hellinger distance was proved in \citet{van1993hellinger,gao2009rate}, while asymptotic normality of the $L_1$ error was analyzed in \citet{groeneboom1999asymptotic}. Additional asymptotic and non-asymptotic properties of the Grenander estimator have been developed in \citet{balabdaoui2010estimation,birge1989grenader,kulikov2005asymptotic,groeneboom1993isotonic}.

\subsection{Our contributions}
% Our main contributions are summarized below.

\paragraph{Online monotone density estimation.}
Despite the extensive offline literature, monotone density estimation has not been systematically studied in the online setting. This work fills this gap.
\begin{itemize}
    \item We propose two online monotone density estimators: an online analogue of the classical Grenander estimator, and an expert aggregation estimator based on exponential weighting methods from online learning.
    \item In the well-specified stochastic setting, where the true density $q$ is monotone, we establish statistical guarantees for both estimators, showing that their excess KL-risk is of order $n^{1/3}$.
    \item In the adversarial setting, we derive a pathwise regret bound of order $\sqrt{n\log n}$ for the expert aggregation estimator, under mild regularity conditions on the observed sequence.
    \item The pathwise regret bound suggests improved finite-sample adaptivity of the expert aggregation estimator relative to the online Grenander estimator in a mis-specified stochastic setting, a phenomenon that we confirm empirically through numerical simulations.
\end{itemize}

\paragraph{Application to sequential testing.}
We develop a non-trivial application of online monotone density estimation to the
construction of sequential hypothesis tests.
\begin{itemize}
    \item While most classical tests are formulated in terms of p-values, e-values provide a flexible framework for sequential inference under optional stopping. We observe that the problem of constructing optimal p-to-e calibrators reduces to online monotone density estimation.
    \item We adapt the proposed online density estimators to develop empirically adaptive p-to-e calibrators, yielding valid sequential tests. If under the alternative, the p-values are iid with a monotone density, the resulting calibrators are asymptotically log-optimal.
\end{itemize}

\paragraph{Organization of the paper.}
% The remainder of the paper is organized as follows.
Section~\ref{sec:online_monotone_density_estimation} introduces the proposed online monotone density estimators and establishes bounds on their excess KL-risk and regret. Section~\ref{sec:application_to_sequential_testing} presents an application to sequential testing, yielding valid procedures with provable optimality guarantees. Finally, Section~\ref{sec:simulations} illustrates the theoretical results through a series of numerical experiments.

\section{Online monotone density estimation}\label{sec:online_monotone_density_estimation}
In this section, we introduce two online algorithms and then establish their theoretical properties.

\paragraph{Online Grenander algorithm.}
Motivated by the classical Grenander estimator $\tilde f_n^{\mle}$, a natural approach is to adapt the maximum likelihood formulation to the online setting. Specifically, for any $0\le a\le b\le\infty$, the online Grenander (OG) algorithm $\hat f^{\,\mathrm{OG}}_{a,b}=(\hat f^{\,\mathrm{OG}}_{t,a,b})_{t\ge1}$ produces, at each time $t\ge2$, the estimator
\begin{equation}\label{eq:online_grenander}
\hat f^{\,\mathrm{OG}}_{t,a,b}\in\argmax_{f\in\mathcal D_{a,b}}\sum_{i=1}^{t-1}\log f(X_i),\qquad \hat f^{\,\mathrm{OG}}_{1,a,b}\equiv 1.
\end{equation}
This algorithm is the direct online analogue of the constrained offline MLE $\tilde f_{n,a,b}^{\mle}$. While statistically natural, it recomputes a full monotone MLE at each time step.

\paragraph{Expert aggregation algorithm.}
To address the computational inefficiency of OG, we consider an aggregation-based alternative, well studied in the online learning literature \citep{cesa2006prediction,hao2018online}. Fix any $0\le a\le b\le\infty$ and a finite collection of monotone density experts
\[
\mathcal E_m=\{f_1,\ldots,f_m\}\subseteq\mathcal D_{a,b}.
\]
The expert aggregation (EA) algorithm $\hat f^{\,\mathrm{EA}}_{a,b}=(\hat f^{\,\mathrm{EA}}_{t,a,b})_{t\ge1}$ outputs, at each time $t\ge1$, the estimator
\begin{equation}\label{eq:expert_aggregation}
\hat f^{\,\mathrm{EA}}_{t,a,b}(u)=\sum_{f\in\mathcal E_m} w_t(f)\,f(u),
\qquad
w_t(f)=\frac{\prod_{i=1}^{t-1} f(X_i)}{\sum_{f'\in\mathcal E_m}\prod_{i=1}^{t-1} f'(X_i)},\qquad u\in[0,1].
\end{equation}
Thus, the EA algorithm produces a predictable sequence of estimators formed as convex combinations of experts, with weights favoring densities that assign higher likelihood to past observations.

\subsection{Risk bounds for online monotone density estimation}
We now bound the excess KL-risk, defined in~\eqref{eq:risk}, for both the OG algorithm \eqref{eq:online_grenander} and the EA algorithm \eqref{eq:expert_aggregation}, which serves as a measure of the statistical accuracy of the proposed online procedures in estimating the true monotone density $q$. Throughout this section, we assume that
\[
X_1,X_2,\ldots \stackrel{iid}{\sim} \bQ,
\qquad q:=\frac{\d\bQ}{\d\lambda}\in\Dcal.
\]
To place our results in context, we first recall some relevant theoretical results of the Grenander estimator. If $q$ is bounded above, then the Grenander estimator satisfies $\d_{\mathrm H}(q,\tilde f^{\mle}_n)=O_p(n^{-1/3})$ 
% \RW{$O_p$?} 
\citep{gao2009rate,van1993hellinger}, where $\d_{\mathrm H}$ denotes the Hellinger distance. When both $q$ and the estimator are additionally bounded away from zero and infinity, this result implies a corresponding $n^{-2/3}$ rate for the expected Kullback--Leibler divergence. 
% For completeness, we recall the definitions of these divergences:
% \begin{equation*}
%     \d_{\mathrm H}^2(p,r):=\frac{1}{2}\int_0^1\bigl(\sqrt{p(x)}-\sqrt{r(x)}\bigr)^2\,\d x,
% \qquad \mathrm{KL}(p\|r)
% :=\int p(x)\log\!\left(\frac{p(x)}{r(x)}\right)\,\d x,
% \end{equation*}
% where $p,r$ are densities supported on $[0,1]$.
We state the precise implication for the KL risk formally below, for later use in our analysis.

\begin{lemma}\label{lem:KL_rate_of_montone_mle}
Fix $n\in\mathbb N$. Suppose that there exist $0<a<b<\infty$ such that $q\in\mathcal{D}_{a,b}$. Then, the constrained Grenander estimator $\hat{f}^{\mathrm{MLE}}_{n,a,b}\equiv \hat{f}^{\mathrm{MLE}}_{n,a,b}(X_1,\ldots,X_n)$ satisfies 
% \RW{How do we interpret the randomness in $\hat{f}^{\mathrm{MLE}}_{n,a,b}$here? Should we clarify it? It is a function.}{RH: how about this? Is it good enough?} 
\[\E_{\bQ}\bigl[\mathrm{KL}\bigl(q\|\hat{f}^{\mathrm{MLE}}_{n,a,b}\bigr)
\bigr]\le\kappa\,n^{-2/3}\]
for some constant $\kappa>0$ depending only on $a$ and $b$.
\end{lemma}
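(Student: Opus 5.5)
The plan has two parts. First, when both densities lie in $\Dcal_{a,b}$, KL is controlled by squared Hellinger distance. Second, the constrained MLE over $\Dcal_{a,b}$ converges at rate $n^{-1/3}$ in Hellinger distance, which we show by a standard sieve-MLE argument.

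\textbf{Step 1: KL versus Hellinger.} For densities $p,r$ with $a\le p,r\le b$, the likelihood ratio $p/r$ lies in $[a/b,b/a]$. Write $\mathrm{KL}(p\|r)=\int r\,\phi(p/r)$ with $\phi(x)=x\log x-x+1$. On a compact interval $[a/b,b/a]$ one has $\phi(x)\le C_{a,b}(\sqrt{x}-1)^2$, since both sides vanish to second order at $x=1$ and the ratio is continuous and positive on the interval. This gives
\[
\mathrm{KL}(p\|r)\le 2C_{a,b}\,\hel^2(p,r).
\]
We apply this with $p=q$ and $r=\hat f^{\mle}_{n,a,b}$; both lie in $\Dcal_{a,b}$ by assumption and by construction. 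It therefore suffices to show $\E_\bQ[\hel^2(q,\hat f^{\mle}_{n,a,b})]\le \kappa' n^{-2/3}$.

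\textbf{Step 2: Hellinger rate in expectation.} We use the MLE rate theorem (van de Geer; Wong--Shen). Its key input is the bracketing entropy of $\{\sqrt f: f\in\Dcal_{a,b}\}$ in $L_2(\lambda)$. Since these are monotone functions bounded by $\sqrt b$, the entropy is $H_{[\,]}(\epsilon)\lesssim \sqrt b/\epsilon$, by the classical Birman--Solomjak / van de Vaart--Wellner bound for bounded monotone functions.

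Solving the entropy integral condition $\int_{\delta^2}^{\delta}H_{[\,]}^{1/2}(u)\,\d u\lesssim \sqrt n\,\delta^2$ gives $\delta_n\asymp n^{-1/3}$. The theorem then yields an exponential tail bound:
\[
\P\bigl(\hel(q,\hat f^{\mle}_{n,a,b})>t\bigr)\le c\exp(-c'nt^2) \qquad \text{for all } t\ge C\delta_n.
\]
Here "MLE" may be taken in the approximate sense. Because $q\in\Dcal_{a,b}$, the model is well specified, so there is no approximation term.

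Integrating the tail, and using $\hel^2\le 1$ on the complementary region, gives
\[
\E[\hel^2]\le C^2\delta_n^2+\int_{C^2\delta_n^2}^{1}c\,e^{-c'ns}\,\d s\lesssim n^{-2/3}.
\]
Combining this with Step 1 yields the lemma, with $\kappa$ depending only on $a,b$.

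\textbf{Main obstacle.} The hardest step is obtaining the tail bound in expectation form with constants uniform over $q\in\Dcal_{a,b}$, rather than the $O_p$ statement quoted from \citet{gao2009rate,van1993hellinger}. I would first invoke van de Geer's Theorem 7.4 (or Wong--Shen Theorem 1) directly, since those results give exponential inequalities whose constants depend only on the entropy bound, and that bound depends only on $b$.

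If a cleaner route is wanted, the peeling argument can be run directly on the log-likelihood ratio. One uses the basic inequality $\sum\log\bigl((\hat f+q)/(2q)\bigr)\ge 0$. The lower bound $a$ makes $\log(\hat f/q)$ bounded, so Bernstein-type chaining over the bracketing entropy applies without the usual truncation issues.
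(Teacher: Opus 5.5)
Your proposal is correct and follows essentially the same route as the paper: a Wong--Shen exponential tail bound for $\hel(\hat f^{\mathrm{MLE}}_{n,a,b},q)$ with $O(1/u)$ bracketing entropy (giving $\varepsilon_n\asymp n^{-1/3}$), integration of that tail to get $\E[\hel^2]\lesssim n^{-2/3}$, and a KL-to-Hellinger comparison valid on $\Dcal_{a,b}$. The differences are cosmetic. The paper obtains the comparison through the chi-square bound $\mathrm{KL}(q\|\hat f)\le \frac{1}{a}\|q-\hat f\|_2^2\le \frac{8b}{a}\hel^2(q,\hat f)$ (Lemma~\ref{lem:L2_H_KL_equivalence}) rather than your compactness argument for $\phi(x)=x\log x-x+1$. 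It also cites Gao--Wellner for the entropy of $\Dcal_{a,b}$, whereas you apply the bounded-monotone-function bracketing bound to $\sqrt f$.
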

We are now ready to state the bounds on excess KL-risk for OG and the EA estimators. 
\begin{theorem}[Risk bounds for OG and EA algorithms]\label{thm:excess_KL_risk_OG_and_EA}
Fix $n\in\mathbb N$ and $0<a<b<\infty$. For any $q\in\mathcal D_{a,b}$ and $X_1,\ldots,X_n\stackrel{iid}{\sim}\mathbb Q$, the OG algorithm satisfies
\[
\mathrm{Risk}_\bQ(\hat{f}^{\,\mathrm{OG}}_{a,b};n)\le\Gamma_{\mathrm{OG}}(a,b)\,n^{1/3},
\]
and there exists a collection of monotone density experts $\mathcal E_{m_n}$ with size $m_n<\infty$ such that for all $q\in\mathcal D_{a,b}$, the corresponding EA algorithm satisfies
\[
\mathrm{Risk}_\bQ(\hat{f}^{\,\mathrm{EA}}_{a,b};n)
\le\Gamma_{\mathrm{EA}}(a,b)\,n^{1/3},
\]
where the constants $\Gamma_{\mathrm{OG}}(a,b)$ and
$\Gamma_{\mathrm{EA}}(a,b)$ depend only on $a$ and $b$.
\end{theorem}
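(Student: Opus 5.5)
For the OG algorithm, I would use the tower-property representation of the excess KL-risk recorded after \eqref{eq:risk}:
\[
\mathrm{Risk}_\bQ(\hat f^{\,\mathrm{OG}}_{a,b};n)=\sum_{t=1}^n\E_\bQ\bigl[\mathrm{KL}(q\|\hat f^{\,\mathrm{OG}}_{t,a,b})\bigr].
\]
For $t\ge2$, the estimator $\hat f^{\,\mathrm{OG}}_{t,a,b}$ is exactly the constrained Grenander estimator $\hat f^{\mle}_{t-1,a,b}$ built from the $t-1$ iid observations $X_1,\ldots,X_{t-1}$. Lemma~\ref{lem:KL_rate_of_montone_mle} therefore gives $\E_\bQ[\mathrm{KL}(q\|\hat f^{\,\mathrm{OG}}_{t,a,b})]\le\kappa(t-1)^{-2/3}$.

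The $t=1$ term is $\mathrm{KL}(q\|1)=\int q\log q\le\log b$. Summing, and using $\sum_{s=1}^{n-1}s^{-2/3}\le 3n^{1/3}$, I obtain
\[
\mathrm{Risk}_\bQ(\hat f^{\,\mathrm{OG}}_{a,b};n)\le\log b+3\kappa n^{1/3}.
\]
This is at most $\Gamma_{\mathrm{OG}}(a,b)\,n^{1/3}$ with $\Gamma_{\mathrm{OG}}:=\log b+3\kappa$. The only subtlety is measurability and a choice of argmax when it is non-unique, and Lemma~\ref{lem:KL_rate_of_montone_mle} applies to any maximizer.

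For the EA algorithm, I would use the standard mixture bound for exponential weights under log-loss. Since $\prod_t\hat f^{\,\mathrm{EA}}_t(X_t)=\frac1m\sum_{j}\prod_{t}f_j(X_t)$ (telescoping of the Bayes mixture), we have pathwise
\[
\mathcal L(\hat f^{\,\mathrm{EA}},n)\le\min_j\mathcal L(f_j,n)+\log m.
\]
Taking expectations and using $\E\min_j\le\min_j\E$ gives
\[
\mathrm{Risk}_\bQ(\hat f^{\,\mathrm{EA}};n)\le\log m+n\min_j\mathrm{KL}(q\|f_j).
\]
It then suffices to construct $\mathcal E_{m_n}\subseteq\mathcal D_{a,b}$ with two properties: every $q\in\mathcal D_{a,b}$ has some $f_j$ with $\mathrm{KL}(q\|f_j)\lesssim n^{-2/3}$, and $\log m_n\lesssim n^{1/3}$.

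The construction I propose is as follows. Partition $[0,1]$ into $k\asymp n^{1/3}$ equal bins, and let $\bar q$ be the monotone histogram obtained by averaging $q$ on each bin. Then quantize the heights to a grid in $[a,b]$ with spacing $\delta\asymp n^{-1/3}$, preserving monotonicity, and renormalize. Since all densities lie in $[a,b]$, KL is comparable to $\chi^2$, which is comparable to squared $L_2$ distance with constants depending on $a$ and $b$. The histogram approximation error for a monotone function bounded by $b$ satisfies
\[
\|q-\bar q\|_2^2\le\sum_i\frac1k\,\mathrm{osc}_i^2\le\frac{(b-a)}{k}\sum_i\mathrm{osc}_i\le\frac{(b-a)^2}{k}\asymp n^{-1/3}.
\]
This is too large: it would only give KL of order $n^{-1/3}$ and a total risk of order $n^{2/3}$.

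This is the main obstacle. Equal-width bins do not give the $n^{-2/3}$ approximation rate on their own, so the construction must be adaptive. My fix is to use free-knot (adaptive) monotone histograms: for monotone $q$, choosing breakpoints that equalize both the width and the oscillation yields a squared $L_2$ error of order $k^{-2}$ with $k$ pieces. So with $k\asymp n^{1/3}$ pieces the error is $n^{-2/3}$. Breakpoints are taken from a grid of mesh $n^{-1}$, and heights from a grid of spacing $n^{-1/3}$ in $[a,b]$; renormalization perturbs KL by at most $O(n^{-2/3})$.

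The counting then works out. The number of such experts is at most $\binom{n}{k}\,(C n^{1/3})^{k}$, so $\log m_n\lesssim k\log n=n^{1/3}\log n$. This leaves a residual $\log n$ factor. To remove it, I would instead cite the known metric entropy of bounded monotone densities, $\log N(\epsilon)\asymp\epsilon^{-1}$ in Hellinger or $L_2$ distance (Birman--Solomyak / van de Geer). Taking $\epsilon\asymp n^{-1/3}$ gives $\log m_n\asymp n^{1/3}$ together with a sup-KL approximation error of order $\epsilon^2=n^{-2/3}$, because KL is comparable to squared Hellinger distance on $\mathcal D_{a,b}$. The cover centers can be projected into $\mathcal D_{a,b}$ via the monotone $L_2$ projection clipped to $[a,b]$, at most doubling $\epsilon$.

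Combining these bounds gives $\Gamma_{\mathrm{EA}}(a,b)\,n^{1/3}$, where $\Gamma_{\mathrm{EA}}$ depends only on $a$ and $b$ through the entropy constant and the KL--Hellinger comparison constant.
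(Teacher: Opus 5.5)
Your proposal is correct and follows the paper's own proof. For OG you sum Lemma~\ref{lem:KL_rate_of_montone_mle} over $t$ via the tower-property form of the risk. For EA you combine the mixture bound of Lemma~\ref{lem:mixability_of_logloss} with a Hellinger $n^{-1/3}$-cover of $\mathcal D_{a,b}$ of log-size $O(n^{1/3})$ and the KL--Hellinger comparison on $\mathcal D_{a,b}$; the free-knot histogram detour is unnecessary.

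Two small points. Your separate treatment of the $t=1$ term via $\mathrm{KL}(q\|1)\le\log b$ tidies a step the paper glosses over, since its sum contains $0^{-2/3}$. For the cover, it is simpler to pick any member of $\mathcal D_{a,b}$ in each $\epsilon$-ball than to clip an isotonic projection to $[a,b]$, which need not integrate to one.
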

The rate in Theorem~\ref{thm:excess_KL_risk_OG_and_EA} is essentially optimal. Indeed, in the offline setting, monotone density estimation is known to attain minimax $L_1$ risk of order $n^{-1/3}$, corresponding to excess-KL risk of order $n^{1/3}$. Moreover, any substantial improvement in the online rate would yield an improvement in the offline problem through a standard online-to-offline conversion argument. Specifically, given online estimators $\hat f_1,\ldots,\hat f_n$, one may construct an offline estimator via averaging,
\[
\bar f_n:=\frac1n\sum_{t=1}^n \hat f_t,
\]
thereby transferring cumulative online risk guarantees to the offline setting.

Although both procedures achieve the same worst-case rate, their finite-sample behavior can differ substantially. In particular, our empirical results suggest that the EA procedure often attains smaller regret in practice, reflecting improved empirical adaptivity.

\subsection{Bounding regret for the EA algorithm }\label{sec:bounding_regret_for_EA}
In this subsection, we fix $n\in\N$, $0<a<b<\infty$ and derive a pathwise regret bound for the EA algorithm under mild spacing and boundary conditions on the observed sequence $(x_1,x_2,\ldots,x_n)$. The proof is based on a carefully chosen finite class of monotone histogram experts, that we define below.

We begin by introducing a binning of the unit interval. For $\beta>0$ and $n\ge1$, define the grid
\[
\mathcal B_n := \left\{0,\Delta_b,2\Delta_b,\ldots,\lfloor n^{(\beta+1)}\rfloor\Delta_b\right\},
\qquad
\Delta_b := n^{-(\beta+1)}.
\]
% \RW{Unclear how this is defined when $n^{\beta+1}$ is not an integer. Please clarify.}
Building upon this grid, we construct a finite class of monotone histogram densities. For integers $k\ge 1$, let $\mathcal{E}_{k,n,\beta}$ denote the collection of functions $g\in\Dcal_{a,b}$ of the form
\[
g(x)=\sum_{j=1}^r \theta_j\,\One{t_{j-1}\le x<t_j},
\]
where the parameters satisfy the following:
\begin{itemize}[itemsep=0.9pt, topsep=2.2pt]
\item[(i)] the number of bins $r$ satisfies $r\le k$;
\item[(ii)] the breakpoints $t_1,\ldots,t_{r-1}$ belong to the grid $\mathcal B_n$, and $t_0=0, t_r=1$; 
% \RW{Do we need to add $b_r=1$?}
\item[(iii)] the heights are non-increasing, $\theta_1\ge\cdots\ge\theta_{r-1}$, with $\log\theta_j\in\Lambda_{n,a,b}$ for $j=1,\ldots,r-1$, where $\Lambda_{n,a,b}:=\log(a)+\log(b/a)\,\mathcal B_n$;  
\item[(iv)] the final height $\theta_r$ is defined as $\theta_r:=\frac{1-\sum_{j=1}^{r-1}\theta_j(t_j-t_{j-1})}{1-t_{r-1}}$, and is required to satisfy that $\theta_r\in[a,b]$ and $\theta_{r-1}\ge\theta_r$.
\end{itemize} 
We show in the next section that, for fixed $(n,k,\beta)$, we derive a bound on the size of $\mathcal E_{k,n,\beta}$.
Classical discretizations of monotone density classes, as in \cite{gao2007entropy}, are typically designed to control approximation error in metrics such as $L_p$ or Hellinger distance. Such guarantees are insufficient for the present analysis, which requires control of cumulative log-loss and hence control on pointwise likelihood ratios along the realized sample path. Our construction therefore refines these classical discretization ideas to additionally provide suitable pointwise control while preserving monotonicity.

It is worth emphasizing that the EA algorithm itself does not require knowledge of $n$ for implementation. Indeed, the procedure simply performs sequential exponential weighting over a finite collection of experts. The dependence of the expert class on $n$ arises here solely as a technical device to facilitate a sharp theoretical analysis, enabling us to derive optimal  regret guarantee.
% \RW{This statement is too strong. It is exponential in $k$ (actually $n^k$) so it may not be feasible even if finite. We should not make such a statement unless we have a polynomial algorithm}{RH: valid point}

Before stating the regret bound, we formalize the regularity conditions imposed on the data sequence. For $\beta,\gamma>0$ and $n\ge 1$, we define the \emph{good} set
\begin{equation}\label{eq:good_set}
    \mathcal{S}_n(\beta,\gamma):=\Bigl\{(x_1,\ldots,x_n)\in[0,1]^n:
\min_{1\le i<j\le n}|x_i-x_j|\ge n^{-\beta},\ \max_{1\le i\le n}x_i\le 1-n^{-\gamma}\Bigr\}.
\end{equation}
Elements of $\mathcal S_n(\beta,\gamma)$ are well separated, with minimum spacing of order $n^{-\beta}$, and remain uniformly away from the boundary point $1$ by a margin of order $n^{-\gamma}$. These conditions ensure stability of the discretization with respect to the pathwise loss along the observed sample path from $\mathcal S_n(\beta,\gamma)$. In particular, the rounding of breakpoints and heights induced by the experts in $\mathcal E_{k,n,\beta}$ introduces only a controlled perturbation in likelihood evaluations, effectively yielding local $\ell_\infty$ control along the sample path. We are now ready to formally state the regret bound.
\begin{theorem}\label{thm:pathwise_regret_monotone_density}
Fix $\beta>0$, $\gamma\in (0,\beta-\tfrac12)$. For  $n\ge1$, and for all $(x_1,\ldots,x_n)\in\mathcal S_n(\beta,\gamma)$,
the EA algorithm $\hat f^{\mathrm{EA}}_{a,b}$ based on the set of experts $\mathcal E_{k,n,\beta}$ with  $k=\lfloor(n/\log{n})^{1/2}\rfloor$ satisfies
\[
\mathrm{Regret}(\hat f^{\,\mathrm{EA}}_{a,b};n,a,b)
\;\le\;
\Gamma(a,b,\beta)\,\sqrt{n\log n},
\]
where $\Gamma(a,b,\beta)$ is a constant depending only on $a$, $b$, and $\beta$. 
% \RW{not $k$? can I choose $k=1$?}{RH: k needs to be chosen adaptively..it is later chosen as $\sqrt{n}$ in the proof} \RW{Let's add: where $k$ depends on $n$ or specifically: $k=\sqrt {n/\log n}$. We cannot say ``the set of experts" because it is not fully specified ($k$ is not specified).}{Thanks for pointing out!} \RW{I think $k\asymp \sqrt {n/\log n} $ is OK for the statement, which is what we proved.} \RW{Sorry, I think it is better to use $=$ now, because I think we really want $\Gamma$ to not depend on this choice.}{RH: this looks good!}
\end{theorem}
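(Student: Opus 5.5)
The plan is to split the regret into an \emph{aggregation term} and an \emph{approximation term}. For the aggregation term, I would use the standard exp-concavity (Bayes-mixture) bound for exponential weights under log-loss. Since $\hat f^{\mathrm{EA}}_{t,a,b}$ is exactly the Bayesian predictive mixture with uniform prior over $\mathcal E_{k,n,\beta}$, the product $\prod_{t}\hat f^{\mathrm{EA}}_{t,a,b}(x_t)$ telescopes to $\frac{1}{m}\sum_{g}\prod_t g(x_t)$. This gives, pathwise,
\[
\mathcal L(\hat f^{\mathrm{EA}}_{a,b},n)\le \min_{g\in\mathcal E_{k,n,\beta}}\mathcal L(g,n)+\log|\mathcal E_{k,n,\beta}|.
\]
I would then bound the cardinality. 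An expert is determined by at most $k$ breakpoints in a grid of size about $n^{\beta+1}$ and at most $k$ log-heights in a grid of the same size. Hence $|\mathcal E_{k,n,\beta}|\le (n^{\beta+1}+1)^{2k}$, so $\log|\mathcal E_{k,n,\beta}|\lesssim (\beta+1)k\log n$. With $k=\lfloor (n/\log n)^{1/2}\rfloor$ this is $O((\beta+1)\sqrt{n\log n})$.

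For the approximation term, I would compare the constrained Grenander estimator $\tilde f^{\mle}_{n,a,b}$ to a nearby expert. First reduce the number of pieces. $\tilde f$ is a monotone histogram taking values in $[a,b]$, with breakpoints that may be taken at data points. I would approximate it by a monotone step function with at most $k$ pieces whose log is within $\log(b/a)/k$ of $\log\tilde f$ pointwise. To do this, partition the range $[\log a,\log b]$ into $k$ equal level bands and merge consecutive pieces lying in the same band, replacing them by, say, the value at the top of the band. This changes the log-likelihood at each point by at most $O(1/k)$. The approximant is then renormalised; normalisation costs at most a multiplicative factor $e^{O(1/k)}$, since the integral is within $e^{\pm\log(b/a)/k}$ of one. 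The total cost over $n$ points is $O(n/k)=O(\sqrt{n\log n})$. This is where the choice of $k$ balances the two terms.

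Next I would round this $k$-piece function onto the grids. Each log-height is rounded to $\Lambda_{n,a,b}$, with spacing $\log(b/a)n^{-(\beta+1)}$, preserving order by rounding monotonically (all down, say). This perturbs each log-likelihood by $O(n^{-(\beta+1)})$, which totals $o(1)$. Each breakpoint is rounded to $\mathcal B_n$ at distance at most $n^{-(\beta+1)}<n^{-\beta}$. The spacing condition in $\mathcal S_n(\beta,\gamma)$ matters here: if breakpoints sit at data points or in the gaps between them, one can round each breakpoint in a direction that moves no sample point across a jump. As a result every $x_i$ keeps the same height index, and the likelihood along the path is unchanged apart from the height rounding.

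The main obstacle is the last bin together with the renormalisation constraint (iv): $\theta_r$ is forced by the mass constraint and must lie in $[a,b]$ and below $\theta_{r-1}$. Here I would use $\max_i x_i\le 1-n^{-\gamma}$. The final interval has length at least of order $n^{-\gamma}$. The mass discrepancy created by rounding and merging, of order $n^{-(\beta+1)}\cdot k$ from breakpoints plus the height errors, can be absorbed into $\theta_r$ with relative change $O(n^{\gamma}\cdot\text{discrepancy})$. Because $\gamma<\beta-\tfrac12$, this is $o(n^{-1/2})$, which is enough to stay in $[a,b]$ (after first shrinking heights slightly toward the interior if needed) and costs $O(\sqrt n)$ in total log-loss. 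The $1/k$ merging error should instead be handled by the exact renormalisation factor spread over all bins, not by $\theta_r$, to avoid dividing it by $n^{-\gamma}$. Collecting the three contributions gives $\Gamma(a,b,\beta)\sqrt{n\log n}$.
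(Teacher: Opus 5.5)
Your proposal has the same architecture as the paper's proof. It uses the Bayes-mixture bound of Lemma~\ref{lem:mixability_of_logloss} together with $\log|\mathcal E_{k,n,\beta}|\lesssim(\beta+1)k\log n$. It compresses $\tilde f^{\mle}_{n,a,b}$ to at most $k$ pieces at cost $O(nV/k)$, rounds onto $\mathcal B_n$ and $\Lambda_{n,a,b}$, and absorbs the leftover mass into the last bin using $1-t_{r-1}\ge n^{-\gamma}$.

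Two local choices differ from the paper.
\begin{itemize}
\item Fixed level bands instead of the greedy blocks of Lemma~\ref{lem:compressed_version_of_constrained_grenander}: this is immaterial.
\item Choosing the rounding direction of each breakpoint so that no sample point changes bins: this is legitimate. Each compressed breakpoint sits at a data point, and $[t_j-\Delta_b,t_j+\Delta_b]$ contains at most one sample point once $2n^{-(\beta+1)}<n^{-\beta}$. It removes the $(k-1)V$ bin-switching term of Lemma~\ref{lem:discretize_hist_into_net}. The price is that you lose the pointwise domination $g\le f$ the paper gets from always rounding breakpoints left and heights down. That domination is what gives the paper $\theta_r'\ge\theta_r\ge a$ for free, and it is why you need the extra shrink-toward-the-interior step (e.g.\ replacing $h$ by $(1-\epsilon)h+\epsilon$ with $\epsilon\asymp n^{-1/2}$, at cost $O(\sqrt n)$). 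Since $(k-1)V=O(\sqrt{n/\log n})$ anyway, neither route gains in the final rate.
\end{itemize}

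The step you flag as the main obstacle but do not close is the monotonicity requirement $\theta_r'\le\theta_{r-1}'$ in (iv). Shrinking toward the interior only secures $\theta_r'\in[a,b]$; it creates no room between consecutive heights. What you need is a quantitative gap between the last two heights that dominates two perturbations:
\begin{itemize}
\item the change in $\theta_r$ from renormalisation, of order $b(V\Delta_b+2k\Delta_b)n^{\gamma}$;
\item the downward rounding of $\theta_{r-1}$, which is at most $bV\Delta_b$.
\end{itemize}
The paper builds this gap in as \eqref{eq:height_drop_of_compressed_mle}, which gives $\theta_{r-1}-\theta_r\ge a(e^{V/k}-1)\ge aV/k$.

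Your top-of-band representative happens to supply the same gap. Distinct band tops differ by a multiple of $V/k$ in log, renormalisation preserves log-gaps, and mixing with the uniform shrinks absolute gaps only by the factor $1-\epsilon$. But your ``say'' suggests the representative was arbitrary, and other natural choices (e.g.\ the largest original height in each merged block) give no such gap. You should state the gap explicitly and check it against the perturbation: $aV/k\asymp\sqrt{\log n/n}$ versus $n^{\gamma-\beta-1/2}=o(n^{-1})$. This holds for all large $n$ because $\gamma<\beta-\tfrac12$; finitely many small $n$ are absorbed into $\Gamma(a,b,\beta)$ via the trivial bound $\mathrm{Regret}\le nV$.
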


Although the regularity conditions encoded in $\mathcal S_n(\beta,\gamma)$ may appear restrictive, they are mild in the stochastic setting. In particular, when $X_1,\ldots,X_n\stackrel{iid}{\sim}\bQ$ with $q\in\Dcal$, for any choices of $\beta>2$ and any $\gamma>1$, the sample path $(X_1,\ldots,X_n)$ belongs to $\mathcal S_n(\beta,\gamma)$ with high probability. We formalize these statements in Lemma~\ref{lem:min_distance_in_dimension_d}, given in Appendix~\ref{app:supplementary_results}.
As an immediate consequence, we get the following high-probability regret bound under the iid alternative model.
\begin{corollary} 
Suppose $X_1,\ldots,X_n\stackrel{iid}{\sim}\bQ$ with $q\in\Dcal_{a,b}$. Consider the EA algorithm $\hat{f}^{\,\mathrm{EA}}_{a,b}$ based on the set of experts $\mathcal E_{k,n,\beta}$ with  $k=\lfloor(n/\log{n})^{1/2}\rfloor$ for any $\beta>2$. Then, for any $\delta\in(0,1)$, there exists $N\equiv N(\delta,\beta,b)$ such that for all $n>N$,
\[
\P_{\bQ}\left(\mathrm{Regret}(\hat f^{\,\mathrm{EA}}_{a,b};n,a,b)\;\le\;C\,\sqrt{n\log n}\right)\ge 1-\delta,
\]
where $C$ is a constant, only depending on $a,b,\beta$.
\end{corollary}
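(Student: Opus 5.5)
The corollary follows by combining Theorem~\ref{thm:pathwise_regret_monotone_density} with the high-probability membership statement of Lemma~\ref{lem:min_distance_in_dimension_d}. Theorem~\ref{thm:pathwise_regret_monotone_density} is deterministic: on the event $\{(X_1,\ldots,X_n)\in\mathcal S_n(\beta,\gamma)\}$ it gives $\mathrm{Regret}(\hat f^{\,\mathrm{EA}}_{a,b};n,a,b)\le\Gamma(a,b,\beta)\sqrt{n\log n}$. It therefore suffices to choose $\gamma$ admissibly and show $\P_\bQ\bigl((X_1,\ldots,X_n)\in\mathcal S_n(\beta,\gamma)\bigr)\ge1-\delta$ for all large $n$. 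We then set $C:=\Gamma(a,b,\beta)$.

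\textbf{Choice of $\gamma$.} The theorem needs $\gamma\in(0,\beta-\tfrac12)$, while the boundary condition should hold with high probability, which wants $\gamma>1$. Since $\beta>2$, we have $\beta-\tfrac12>\tfrac32$. We fix a value depending only on $\beta$, for instance $\gamma:=\tfrac12\bigl(1+(\beta-\tfrac12)\bigr)=\tfrac{\beta}{2}+\tfrac14\in(1,\beta-\tfrac12)$. Then $C=\Gamma(a,b,\beta)$ depends only on $(a,b,\beta)$, as the corollary requires.

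\textbf{Probability bounds.} We bound the two failure events separately, using $q\le b$, so that every interval of length $h$ has $\bQ$-mass at most $bh$. This is what Lemma~\ref{lem:min_distance_in_dimension_d} records.
\begin{itemize}
\item \emph{Spacing.} A union bound over pairs gives
\[
\P\Bigl(\min_{i<j}|X_i-X_j|<n^{-\beta}\Bigr)\le\binom n2\sup_x\bQ\bigl(|X-x|<n^{-\beta}\bigr)\le n^2\,b\,n^{-\beta}=b\,n^{2-\beta}.
\]
This tends to $0$ because $\beta>2$.
\item \emph{Boundary.} Similarly,
\[
\P\bigl(\max_i X_i>1-n^{-\gamma}\bigr)\le n\,b\,n^{-\gamma}=b\,n^{1-\gamma},
\]
which tends to $0$ because $\gamma>1$.
\end{itemize}
Choose $N(\delta,\beta,b)$ so that $b\,n^{2-\beta}+b\,n^{1-\gamma}\le\delta$ for all $n>N$. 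Since $\gamma$ is a function of $\beta$, this $N$ depends only on $(\delta,\beta,b)$, matching the statement. For such $n$, the good event has probability at least $1-\delta$, and on it the regret bound holds.

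\textbf{Main subtlety.} The only delicate point is reconciling the constraints on $\gamma$, namely $\gamma<\beta-\tfrac12$ from the theorem and $\gamma>1$ from the boundary probability bound. This is exactly why $\beta>2$ is assumed: it makes the interval $(1,\beta-\tfrac12)$ nonempty and also drives the spacing probability to zero. One should also check that $k=\lfloor(n/\log n)^{1/2}\rfloor\ge1$ for $n>N$, which holds after enlarging $N$ if needed.
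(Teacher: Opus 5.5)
Your proposal is correct and follows the paper's route: the corollary comes from intersecting the deterministic bound of Theorem~\ref{thm:pathwise_regret_monotone_density} with the high-probability good event of Lemma~\ref{lem:min_distance_in_dimension_d}, with $\gamma\in(1,\beta-\tfrac12)$, an interval that is nonempty because $\beta>2$. Your explicit choice $\gamma=\tfrac{\beta}{2}+\tfrac14$ makes $C$ depend only on $(a,b,\beta)$ and $N$ only on $(\delta,\beta,b)$, as the statement requires; the paper leaves this implicit, and its lemma even writes $N(\delta,\beta,\gamma)$.
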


\subsection{Proof sketch for the pathwise regret bound (Theorem~\ref{thm:pathwise_regret_monotone_density})}

Here, we outline the main ideas underlying the proof of the pathwise regret bound, deferring the proofs of the technical lemmas to Appendix~\ref{app:completing_proof_of_pathwise_regret}. Throughout this section, we write $V:=\log(b/a)$. We also write $\mathrm{Hist}_{k,\downarrow}$ to denote the class of monotone histograms with $k$ bins, that is
% \begin{multline*}
%     \mathrm{Hist}_{k,\downarrow}:=\biggl\{h(x)=\sum_{i=1}^k \theta_i\,\One{t_{i-1}\le x<t_i}:  ~~0=t_0<t_1<\cdots<t_k=1,\\
%     b\ge \theta_1\ge \ldots\ge \theta_k\ge a,\quad \sum_{j=1}^k \theta_j(t_j-t_{j-1})=1\biggr\}.
% \end{multline*}
% \RW{See whether the definition below is better (one line). Note that I did not order $t$s and $\theta$s. I guess we don't need it anyway.}{RH: yeah, happy with it!}
\begin{equation*}
    \mathrm{Hist}_{k,\downarrow}:= \biggl\{h: x\mapsto\sum_{i=1}^k \theta_i\,\One{t_{i-1}\le x<t_i} \mbox{ for some $t_0,t_1,\dots,t_k$ and $\theta_1,\dots,\theta_k$} \biggr\} \cap \mathcal D_{a,b}.
\end{equation*}

The proof proceeds by comparing the EA algorithm to a monotone histogram approximation of the constrained Grenander estimator. We summarize the key steps below.

\medskip
\noindent\textbf{Step~1: a monotone histogram approximation of $\tilde f^{\mle}_{n,a,b}$ with controlled height variation:}\hspace{0.5em}
We begin by noting that, as in the unconstrained case, the constrained variant $\tilde f^{\mle}_{n,a,b}$ is also a monotone histogram. For the sake of completion, we state and prove this structural property in Lemma~\ref{lem:bounded_grenander_histogram}.

However, there is no immediate control on the heights or breakpoints of $\tilde f^{\mle}_{n,a,b}$. Therefore, next, we establish such a control. Specifically, we show that $\tilde f^{\mle}_{n,a,b}$ can be approximated by a monotone histogram with at most $k$ bins, while incurring only an additional cost of $\mathrm{O}(1/k)$. We also show that despite using a compressed representation with at most $k$ bins,  adjacent bin heights are well separated on the logarithmic scale and the induced distortion in $\mathcal{L}(\cdot,n)$ is small. The following lemma formally states this.

\begin{lemma}\label{lem:compressed_version_of_constrained_grenander}
Let $\tilde f^{\mle}_{n,a,b}$ be the constrained Grenander estimator in
\eqref{eq:constrained_offline_grenander}. For any integer $k\ge1$, there exists a monotone histogram $h\in\bigcup_{r=1}^k \mathrm{Hist}_{r,\downarrow}$ of the form
\[
h(x)=\sum_{i=1}^L \theta_i\,\One{t_{i-1}\le x<t_i},\qquad 0=t_0<t_1<\cdots<t_L=1, \qquad L\le k,
\]
such that $t_{L-1}\le \max_{1\le i\le n}X_i$ and, for every $j\in\{1,\ldots,L-1\}$,
\begin{equation}\label{eq:height_drop_of_compressed_mle}
\log\theta_j-\log\theta_{j+1}\ge \frac{V}{k}.
\end{equation}
Moreover, $h$ satisfies
\begin{equation}\label{eq:logloss_distortion_of_compressed_mle}
\bigl|
\mathcal L(\tilde f^{\mle}_{n,a,b},n)-\mathcal L(h,n)\bigr|\le 2n\,\frac{V}{k}.
\end{equation}
\end{lemma}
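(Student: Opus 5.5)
The plan is to start from the structural fact (Lemma~\ref{lem:bounded_grenander_histogram}) that $\tilde f^{\mle}_{n,a,b}$ is a monotone histogram with heights in $[a,b]$, say $\tilde f=\sum_{i=1}^M \eta_i\One{s_{i-1}\le x<s_i}$ with $b\ge\eta_1\ge\cdots\ge\eta_M\ge a$, and with the last breakpoint $s_{M-1}\le\max_i X_i$ (beyond the maximum the estimator is a single constant piece). Since $\log\eta_i$ lies in an interval of length $V=\log(b/a)$, I will partition $[\log a,\log b]$ into $k$ consecutive subintervals of length $V/k$, $I_1,\dots,I_k$ (from the top down). Consecutive bins of $\tilde f$ whose log-heights fall in the same $I_j$ are merged; by monotonicity, bins with log-heights in $I_j$ form a contiguous block, so this yields at most $k$ blocks.

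On each merged block $B$ I will set the new height to the mass-preserving average $\theta_B=\int_B\tilde f/|B|$. This preserves total mass, so $h$ is a density, and it preserves monotonicity because block averages of a non-increasing function over consecutive blocks are non-increasing. Also $\log\theta_B\in I_j$ since it is an average of values in $I_j$, so $h\in\mathcal D_{a,b}$ with at most $k$ pieces. Pointwise on each block $|\log h-\log\tilde f|\le V/k$, because both log-heights lie in the same interval of length $V/k$. Summing over the $n$ observations gives $|\mathcal L(\tilde f,n)-\mathcal L(h,n)|\le nV/k$, which is within the claimed $2nV/k$. The breakpoints of $h$ form a subset of those of $\tilde f$, so $t_{L-1}\le\max_i X_i$ still holds.

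The main obstacle is the separation condition~\eqref{eq:height_drop_of_compressed_mle}. Adjacent blocks lie in adjacent intervals $I_j,I_{j+1}$, but their averaged heights could be arbitrarily close across the shared boundary. My fix is a greedy merging pass on the block sequence. Scanning from left to right, whenever $\log\theta_j-\log\theta_{j+1}<V/k$, I merge block $j+1$ into the current block and recompute the mass-weighted average, then continue. The result is a chain of merged blocks each of whose consecutive drops is at least $V/k$, and monotonicity and mass are again preserved.

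The delicate point is controlling the pointwise log-distortion after merging chains, since a chain could accumulate many small drops. I would instead build groups greedily from the original Grenander bins: start a group at the first bin, and add bins while the log-height stays within $V/k$ of the group's first (largest) log-height. Every bin in a group then has log-height in $[\ell-V/k,\ell]$, where $\ell$ is the group's top log-height, so the group average lies in the same window. The next group starts at a bin with log-height below $\ell-V/k$. The averages of distinct groups could still violate the drop condition, so I would then merge any offending adjacent pair one final time. Any such pair spans a log-range of at most $2V/k$, which is exactly where the factor $2$ comes from. The group count is still at most $k$, because each group's top log-height decreases by more than $V/k$ from the previous one over a total range of $V$.

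Checking that this last merge is needed at most once per group, so that the $2V/k$ window is not exceeded, is the step I would verify most carefully. If it fails, I would instead choose the thresholds $I_j$ adaptively (for instance, shifting the grid) to force separations of at least $V/k$ directly.
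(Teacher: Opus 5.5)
Your proposal does not establish the separation condition \eqref{eq:height_drop_of_compressed_mle}, which is the real content of the lemma. You flag this yourself.

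The parts you do have are fine. The greedy grouping you settle on is exactly the paper's: start a group at a bin and keep adding bins whose log-height is within $V/k$ of the group's top. Your arguments for $L\le k$ and $t_{L-1}\le\max_i X_i$ are also correct.

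The trouble is insisting on a mass-preserving average on each block. Write $m_\ell$ for the log of the averaged height of group $\ell$. Then $m_\ell-m_{\ell+1}$ is dictated by how length is distributed inside the blocks and can be arbitrarily small. Your repair by merging offending pairs does not close, because offending pairs can overlap. Nothing prevents $m_1-m_2,\,m_2-m_3,\dots$ from all being below $V/k$: space the group tops $V/k+\epsilon$ apart, and let each group's average sit more than $\epsilon$ higher, relative to the bottom of its own window, than the previous group's.

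Merging such a chain of $J$ groups puts under one height bins whose log-heights span more than $(J-1)V/k$. So some bin is distorted by more than $(J-1)V/(2k)$, which exceeds $2V/k$ once $J\ge5$. The pointwise control your argument relies on is then lost. Merging only a selection of pairs instead leaves no argument that the resulting averages are separated.

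Your account of the factor $2$ is also off: a single merged pair can span a log-range close to $3V/k$. For example, let group $\ell$ have a short top bin at $0$ and most of its length at $-V/k$. Let group $\ell+1$ have most of its length at its top, just above $-2V/k$, and a short bin near $-3V/k$. Shifting a grid of $k$ cells of width $V/k$ does not help either. The averages in two adjacent cells can lie arbitrarily close to their common boundary, wherever the grid is placed.

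The missing idea is to give up mass preservation block by block and normalize once, globally. The paper sets $\theta_\ell=\alpha\,w_{i'_{\ell-1}+1}$, the top Grenander height of block $\ell$ times a single constant $\alpha>0$ chosen so that $h$ integrates to one. Because $\alpha$ is common to all blocks, $\log\theta_\ell-\log\theta_{\ell+1}$ is exactly the drop between consecutive block tops. The greedy rule makes that drop exceed $V/k$, so \eqref{eq:height_drop_of_compressed_mle} comes for free.

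Pointwise, $\tilde f^{\mle}_{n,a,b}\le h/\alpha\le e^{V/k}\tilde f^{\mle}_{n,a,b}$. Integrating gives $e^{-V/k}\le\alpha\le 1$, hence $|\log h-\log\tilde f^{\mle}_{n,a,b}|\le 2V/k$. The factor $2$ is one $V/k$ from replacing heights by block tops, plus one from the normalizing constant.

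One point in favor of your averaging: because $\alpha\le1$, the paper's last height $\alpha\,w_{i'_{L-1}+1}$ can fall below $a$. This happens, for instance, when the final block is a single bin at height $a$ and some earlier block is not constant, so that $\alpha<1$. The paper does not address this, whereas block averages stay in $[a,b]$ automatically. A complete proof of $h\in\bigcup_{r=1}^k\mathrm{Hist}_{r,\downarrow}$ has to secure both properties at once.
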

We denote by $\tilde f^{\mle}_{n,(k),a,b}$, the compressed histogram returned by Lemma~\ref{lem:compressed_version_of_constrained_grenander}, with breakpoints $0=t_0<t_1<\cdots<t_L=1$ and heights $\theta_1\ge\cdots\ge\theta_L$. This approximation serves as the offline benchmark against which the EA algorithm is compared in the remainder of the proof.

\medskip
\noindent\textbf{Step~2: Discretizing the compressed estimator into the expert class.}\hspace{0.5em}
We now show that the compressed histogram $\tilde f^{\mle}_{n,(k),a,b}$ constructed in Step~1 can be closely approximated by an element of the expert class $\mathcal E_{k,n,\beta}$, with only a small additional distortion in $\mathcal{L}(\cdot,n)$.

The key ingredient is the following lemma, which shows that any monotone histogram with well-separated log-heights can be rounded to the expert grid without significantly affecting $\mathcal{L}(\cdot,n)$. While this may seem to follow from a straightforward rounding of endpoints and heights, the proof requires careful arguments to preserve feasibility, monotonicity, and continued membership in $\Dcal_{a,b}$.

\begin{lemma}\label{lem:discretize_hist_into_net}
Fix integers $n\ge1$ and $k\ge2$. Let $f\in\bigcup_{r=1}^k\mathrm{Hist}_{r,\downarrow}$ be of the
form
\[
f(x)=\sum_{j=1}^r \theta_j\,\One{t_{j-1}\le x<t_j},
\qquad 0=t_0<t_1<\cdots<t_r=1,\qquad r\le k,
\]
and suppose that $t_{r-1}\le 1-n^{-\gamma}$ and $\log\theta_{r-1}-\log\theta_r\ge V/k$. Then there exists a histogram $g\in\mathcal E_{k,n,\beta}$ whenever
\begin{equation}\label{eq:monotonicity_condition_lemma}
\frac{V}{k}<1,\qquad V\Delta_b<1,\qquad
a\frac{V}{k} \ \ge\ b\Delta_bV+b(\Delta_b V+2k\Delta_b)n^{\gamma}.
\end{equation}
Moreover, for any $(x_1,\ldots,x_n)\in\mathcal S_n(\beta,\gamma)$, we have that $\bigl|\mathcal L(f,n)-\mathcal L(g,n)\bigr|\le n\Delta_b V+(k-1)V$.
\end{lemma}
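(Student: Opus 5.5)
We construct $g$ explicitly by rounding $f$ onto the expert grid and then bound the pathwise log-loss change bin by bin.

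\emph{Construction.} Keep the number of bins $r\le k$.

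Round each interior breakpoint down to the grid: $s_j:=\Delta_b\lfloor t_j/\Delta_b\rfloor\in\mathcal B_n$ for $j=1,\ldots,r-1$, with $s_0=0$ and $s_r=1$. If two rounded breakpoints coincide, merge the corresponding bins. This only lowers $r$, so it is harmless.

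Round the log-heights $\log\theta_j$, $j\le r-1$, to the grid $\Lambda_{n,a,b}$, whose spacing is $V\Delta_b$. We round them all in the same direction, say down, which keeps $\theta_1'\ge\cdots\ge\theta_{r-1}'$. This needs $\theta_j\ge a$ and $V\Delta_b<1$ so that the grid point stays in $[\log a,\log b]$. Then $|\log\theta_j'-\log\theta_j|\le V\Delta_b$, i.e.\ $\theta_j'\in[\theta_j e^{-V\Delta_b},\theta_j]$.

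Define $\theta_r'$ by the normalization rule (iv). It then remains to verify $\theta_r'\in[a,b]$ and $\theta_{r-1}'\ge\theta_r'$.

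\emph{Feasibility.} The mass on the first $r-1$ bins changes for two reasons:
\begin{itemize}
\item height rounding, which contributes at most $b(1-e^{-V\Delta_b})\le bV\Delta_b$;
\item breakpoint shifts, where each $s_j$ moves by at most $\Delta_b$ and contributes at most $b\Delta_b$, for a total of at most $k b\Delta_b$.
\end{itemize}
The tail length $1-s_{r-1}\ge 1-t_{r-1}\ge n^{-\gamma}$. Hence
\[
|\theta_r'-\theta_r|\lesssim\bigl(bV\Delta_b+2kb\Delta_b\bigr)n^{\gamma},
\]
where the factor $2$ absorbs the change in tail length.

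Monotonicity at the last bin requires $\theta_{r-1}'\ge\theta_r'$. By hypothesis $\theta_{r-1}\ge\theta_r e^{V/k}$, so $\theta_{r-1}-\theta_r\ge \theta_r(e^{V/k}-1)\ge aV/k$. The rounding of $\theta_{r-1}$ loses at most $b\Delta_bV$. Condition \eqref{eq:monotonicity_condition_lemma} is exactly designed so that $aV/k$ dominates the sum of these perturbations, giving $\theta_{r-1}'\ge\theta_r'$. The condition $V/k<1$ lets us linearize $e^{V/k}-1\ge V/k$ cleanly.

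For the range: $\theta_r'\le\theta_{r-1}'\le b$ gives the upper bound. The lower bound $\theta_r'\ge a$ follows because rounding heights down and breakpoints down only reduces head mass, up to the shift error, which then increases $\theta_r'$. I expect the lower bound to need the most care. If that fails, I would instead choose the rounding directions (up versus down for breakpoints) to force the tail mass to increase.

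\emph{Log-loss comparison on $\mathcal S_n(\beta,\gamma)$.} For each $x_i$, either $x_i$ stays in the same bin index under $f$ and $g$, or it lies in some strip $[s_j,t_j)$.
\begin{itemize}
\item \emph{Same bin.} The log ratio is at most $V\Delta_b$ for bins $j<r$. For the final bin it is $|\log\theta_r'-\log\theta_r|$, which the above shows is $O(V\Delta_b)$ after using the monotonicity margin; otherwise it is absorbed into the crude $V$ bound.
\item \emph{Strip points.} Each strip has width at most $\Delta_b=n^{-(\beta+1)}<n^{-\beta}$. Since points are $n^{-\beta}$-separated, at most one sample lies in each strip, so at most $r-1\le k-1$ points are misassigned. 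For each, $|\log f(x_i)-\log g(x_i)|\le\log(b/a)=V$ since both densities lie in $[a,b]$.
\end{itemize}
Summing the two cases gives $|\mathcal L(f,n)-\mathcal L(g,n)|\le n\Delta_bV+(k-1)V$.

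The main obstacle is the bookkeeping in the feasibility step: ensuring the renormalized last height stays within $[a,b]$ and below $\theta_{r-1}'$, which is where the $n^{\gamma}$ factor and condition \eqref{eq:monotonicity_condition_lemma} enter.
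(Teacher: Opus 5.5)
Your construction and monotonicity check match the paper's: round breakpoints and log-heights down, renormalize the last height, compare the margin $aV/k$ with the rounding perturbations, and count strip points. However, two steps do not go through as written.

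The first is $\theta'_r\ge a$, which you leave hedged. Less head mass does not by itself raise the tail height, because the tail interval $[s_{r-1},1)$ is also longer than $[t_{r-1},1)$. Your phrase ``up to the shift error'' concedes a term of unknown sign. The missing observation, which is what the paper uses (phrased as a contradiction), is pointwise domination on the head. Since breakpoints and heights are both rounded down, a point in the $i$-th bin of $f$ lies in some bin $m\ge i$ of $g$. So $g=\theta'_m\le\theta_i=f$ on $[0,s_{r-1})$, whence $\theta'_r(1-s_{r-1})=\int_{s_{r-1}}^1 g\ge\int_{s_{r-1}}^1 f\ge\theta_r(1-s_{r-1})$. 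Summation by parts makes this exact:
\[
(1-s_{r-1})(\theta'_r-\theta_r)=\sum_{j=1}^{r-1}(\theta_j-\theta'_j)(s_j-s_{j-1})+\sum_{j=1}^{r-1}(\theta_j-\theta_{j+1})(t_j-s_j).
\]
Both sums are nonnegative, so $0\le\theta'_r-\theta_r\le b(V\Delta_b+\Delta_b)n^{\gamma}$. There is no need to switch rounding directions, and the factor $k$ in the shift error is not even needed.

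The second is the last bin in the log-loss comparison. The perturbation of $\theta'_r$ carries the factor $n^{\gamma}$, so $|\log\theta'_r-\log\theta_r|$ is not $O(V\Delta_b)$. Nor can it be absorbed into the $(k-1)V$ term: the last bin may contain order $n$ sample points, while only the at most $k-1$ strip points are charged $V$. The paper's proof has the same hole, since it applies $|\log\theta_\ell-\log\theta'_\ell|\le V\Delta_b$ to every bin, including the renormalized one $\ell=r$. What is actually available is one of two bounds:
\begin{itemize}
\item $0\le\log(\theta'_r/\theta_r)\le V/k$, from \eqref{eq:monotonicity_condition_lemma}, which enlarges the right-hand side by $nV/k$;
\item $0\le\log(\theta'_r/\theta_r)\le (b/a)(V+1)\Delta_b n^{\gamma}$, from the identity above, which enlarges it by $(b/a)(V+1)n^{\gamma-\beta}$.
\end{itemize}
Either extra term is harmless for Theorem~\ref{thm:pathwise_regret_monotone_density}. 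The first is of the same order as the $2nV/k$ already incurred in Lemma~\ref{lem:compressed_version_of_constrained_grenander}, and the second is $O(1)$ because $\gamma<\beta$ there. But neither your argument nor the paper's establishes the bound $n\Delta_bV+(k-1)V$ exactly as stated.
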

Here, we apply Lemma~\ref{lem:discretize_hist_into_net} with $f=\tilde f^{\mle}_{n,(k),a,b}$. Note that by Lemma~\ref{lem:compressed_version_of_constrained_grenander} and the assumption that $(x_1,\ldots,x_n)\in\mathcal S_n(\beta,\gamma)$, the final breakpoint satisfies
$t_{L-1}\le \max_{1\le i\le n} x_i \le 1-n^{-\gamma}$,
and that the last two heights obey $\log\theta_{L-1}-\log\theta_L\ge \frac{V}{k}$.
Therefore, whenever \eqref{eq:monotonicity_condition_lemma} holds, there exists a histogram $g^\star_{n,k,\beta}\in\mathcal E_{k,n,\beta}$ and
\[
\bigl|\mathcal L(\tilde f^{\mle}_{n,(k),a,b},n)-\mathcal L(g^\star_{n,k,\beta},n)\bigr|
\le
n\Delta_b V+(k-1)V.
\]

\medskip
\noindent\textbf{Step~3: bounding the size of the expert class $\mathcal E_{k,n,\beta}$.}\hspace{0.5em}
We now derive an upper bound on the cardinality of the expert class
$\mathcal E_{k,n,\beta}$ in terms of $n$, $k$, and $\beta$. By definition, the grid of admissible breakpoints satisfies
\[
|\mathcal B_n|\le n^{\beta+1}+2.
\]
The discretization of the log-heights is defined as
$\Lambda_{n,a,b}:=\log(a)+V\mathcal B_n$, so that $|\Lambda_{n,a,b}|=|\mathcal B_n|$.

We upper bound the size of $\mathcal E_{k,n,\beta}$ by ignoring the monotonicity constraint on the heights and the feasibility conditions imposed on the last height, since both can only reduce the number of admissible histograms. A histogram in $\mathcal E_{k,n,\beta}$ with at most $k$ bins is specified by at most $k-1$ internal breakpoints chosen from $\mathcal B_n$ and at most $k-1$ independent log-height values chosen from $\Lambda_{n,a,b}$. Hence,
\[
|\mathcal E_{k,n,\beta}|
\le
|\mathcal B_n|^{k-1}\,|\Lambda_{n,a,b}|^{k-1}
\;\lesssim\;
\bigl(n^{\beta+1}\bigr)^{2k-2},
\]
where the implicit constant is universal. Consequently, $\log |\mathcal E_{k,n,\beta}|\;\lesssim\;k(\beta+1)\log n$.

\medskip
\noindent\textbf{Step~4: Aggregation over the finite expert class.}\hspace{0.5em}
We now combine the previous steps and complete the proof of regret bound for EA algorithm $\hat f^{\mathrm{EA}}_{a,b}$ defined over the finite class $\mathcal E_{k,n,\beta}$. The key property we use is that exponential aggregation under log-loss performs nearly as well as the best expert in hindsight, up to a logarithmic penalty in the size of the class.

We state and prove this in Lemma~\ref{lem:mixability_of_logloss}, by which we have that $\hat f^{\mathrm{EA}}_{a,b}$ satisfies
\begin{equation}\label{eq:ea_vs_best_expert}
\mathcal L(\hat f^{\mathrm{EA}}_{a,b},n)\;\le\;
\min_{g\in\mathcal E_{k,n,\beta}} \mathcal L(g,n)\;+\;\log|\mathcal E_{k,n,\beta}|.
\end{equation}
From Step~3, we have $\log|\mathcal{E}_{k,n,\beta}|\;\lesssim\;k(\beta+1)\log n.$ Moreover, combining Steps~1 and~2, whenever the condition
\eqref{eq:monotonicity_condition_lemma} holds, there exists an expert
$g^\star_{n,k,\beta}\in\mathcal E_{k,n,\beta}$ such that
\[
\bigl|\mathcal L(\tilde f_{n,a,b}^{\mle},n)-\mathcal L(g^\star_{n,k,\beta},n)\bigr|\;\le\; 2n\frac{V}{k}+nV\Delta_b+(k-1)V.
\]
Now, we make the choice  $k \asymp \sqrt{\frac{n}{\log n}}$,
and recall that $\Delta_b=n^{-(\beta+1)}$ with $\beta>0$. With this choice, the right-hand side above is of order $\sqrt{n\log n}$, and the size of $\mathcal E_{k,n,\beta}$ also satisfies $\log |\mathcal E_{k,n,\beta}|\;\lesssim\;\sqrt{n\log n}$. Furthermore, the condition \eqref{eq:monotonicity_condition_lemma} is satisfied provided that $\gamma<\beta-\tfrac12$.
Substituting these bounds into \eqref{eq:ea_vs_best_expert}, we conclude that
\[
\mathrm{Regret}(\hat f^{\mathrm{EA}}_{a,b};n,a,b)=\mathcal L(\hat f^{\mathrm{EA}}_{a,b},n)
-\mathcal L(\tilde f_{n,a,b}^{\mle},n)\;\lesssim\;
\sqrt{n\log n},
\]
uniformly over all sequences $(x_1,\ldots,x_n)\in\mathcal S_n(\beta,\gamma)$.
This completes the proof of Theorem~\ref{thm:pathwise_regret_monotone_density}.

\section{Applying monotone density estimation to sequential testing}
\label{sec:application_to_sequential_testing}

This section shows how online monotone density estimation can be used to construct sequential tests for a fixed null hypothesis $H_0$. We begin with a brief review
of sequential testing and the role of e-values as a core tool in developing these tests. We then introduce p-to-e calibration as a way to embed classical p-value based tests into the e-value framework, and show that learning
`optimal' calibrators naturally reduces to online monotone density estimation. Finally, we establish theoretical guarantees for the resulting calibrators.

\subsection{Sequential testing: background}\label{sec:review_testing_with_evalues}

Sequential testing studies hypothesis testing with data observed sequentially, allowing decisions to be made at data-dependent times. Such problems arise naturally in modern machine learning and statistics, and have a long history in statistics \citep{barnard1946sequential,wald1992sequential,gurevich2019sequential}.

E-values have emerged as a convenient and flexible tool for constructing sequential tests in this setting \citep{wasserman2020universal, VW21, GDK24, ramdas2023game}; see \citet{ramdas2024hypothesis} for a comprehensive treatment. Formally, an e-value is a nonnegative random variable $E$ satisfying $\E[E]\le 1$ under the null hypothesis. This simple expectation constraint underlies their use in tests that remain valid under data-dependent stopping rules.

In a sequential setting, suppose $(E_t)_{t\ge1}$ is a sequence of e-values adapted to a filtration $(\mathcal F_t)$, with $\E_{H_0}[E_t\mid\mathcal F_{t-1}]\le1$ for all $t\ge1$. An associated e-process $M=(M_t)_{t\ge1}$ can be defined as
\begin{equation}\label{eq:e-martingale}
M_t=\prod_{s=1}^t E_s,\qquad M_0=1.
\end{equation}
The process $(M_t)$ is a nonnegative supermartingale under the null hypothesis. As a consequence, for any data-dependent stopping time $\tau$ satisfying
$\{\tau\le t\}\in\mathcal F_t$, we have $\E_{H_0}[M_\tau]\le 1$, and we also have $\P_{H_0}(\sup_{t\ge1}M_t\ge\frac{1}{\alpha})\le\alpha$
for all $\alpha\in(0,1]$. Therefore, thresholding $(M_t)$ at $1/\alpha$ yields a valid sequential test of $H_0$, regardless of the stopping rule. These properties form the basis for widespread use of e-values as building blocks for sequential inference.

On the other hand, most classical hypothesis testing procedures are formulated in terms of p-values and are designed for fixed-sample settings. While supported by a rich theory characterizing their validity and power in offline regimes, such tests generally fail to control Type~I error under data-dependent stopping rules. To leverage this extensive body of work in sequential settings, it is therefore natural to seek principled ways to embed p-value based methods into the e-value framework. \emph{p-to-e calibration} provides a simple and general mechanism for doing so. 

\subsection{Embedding classical tests via p-to-e calibration}
Formally, a p-to-e calibrator is a function that maps p-values to e-values. That is, for any valid p-value $P$, the transformed quantity $h(P)$ must be a valid e-value. Since this requirement must hold uniformly over all super-uniform p-values, it imposes a structural constraint on the calibrator: it must be dominated by a nonnegative and non-increasing function on $[0,1]$.  
% \RW{Slightly revised here. Also, we use ``nonnegative" without - elsewhere.}

One defines a \emph{p-to-e calibrator} as a non-increasing function $h:[0,1]\to[0,\infty)$ satisfying $\int_0^1 h(p)\,\mathrm{d}p\le 1$, equivalently, a monotone (sub-)probability Lebesgue-density function on $[0,1]$. By definition, if $P$ is a valid p-value, then $h(P)$ is a valid e-value. \citet{VW21} further show that any admissible calibrator must attain equality. Thus, we observe that
\begin{quote}
    \centering \emph{All admissible calibrators are decreasing densities supported on $[0,1]$.}
\end{quote}
We restrict our attention to the admissible calibrators, that is the class $\Dcal$. The connection between calibration and monotone densities plays a central role in our development of the calibrators; this link has appeared previously in \citet{arnold2023sequentially}. From this point onward, we refer to a p-to-e calibrator simply as a calibrator. 

We now describe how calibrators integrate with the e-value framework when only p-values are observed. Suppose we observe a sequence $(P_t)_{t\ge1}$ of p-variables produced by classical hypothesis tests, satisfying under the null hypothesis $H_0$ that
\[
\P_{H_0}(P_t\le\alpha\mid\mathcal F_{t-1})\le\alpha,
\qquad
\textnormal{for all } t\ge1,\ \alpha\in(0,1).
\]
Our goal is to convert these p-values into e-values in an online and adaptive
manner. At each time $t$, we select an $\mathcal F_{t-1}$-predictable calibrator
$h_t\in\Dcal$ and transform $P_t$ into an e-value $h_t(P_t)$. The resulting
e-process is given by
\begin{equation}\label{eq:e-martingale-p}
M_t=\prod_{s=1}^t h_s(P_s),
\qquad
M_0=1.
\end{equation}
By construction, this enables a valid sequential test for any such online calibrator.

\subsection{Empirically adaptive p-to-e calibration}\label{sec:empirical_calibrators}

While any calibrator $h$ yields a valid sequential test via \eqref{eq:e-martingale-p}, the power of the resulting procedure depends critically on how the calibrators are chosen. To formalize an optimal benchmark, \citet[Section~3.6]{ramdas2024hypothesis} introduce the following oracle procedure.

\begin{definition}[Log-optimal decreasing calibrator]\label{defn:log_optimal_calibrator}
For a given alternative distribution $\bQ$, the \emph{$\bQ$-log-optimal calibrator} $h^{\mathrm{opt}}=(h_t^{\mathrm{opt}})_{t\ge1}$ is defined by
\[
h_t^{\mathrm{opt}}\in\argmax_{h\in\Dcal}\E_\bQ\!\left[\log h(P_t)\mid\mathcal F_{t-1}\right].
\]
\end{definition}

Since the alternative distribution $\bQ$ is unknown, the oracle calibrator $h_t^{\mathrm{opt}}$ is not directly computable. This motivates the construction of empirically adaptive calibrators that approximate the oracle and achieve high power in practice.

Throughout the remainder of this section, we work under the alternative hypothesis, which posits that $P_1,P_2,\ldots\stackrel{iid}{\sim}\bQ$ with a monotone density $q\in\Dcal$. 
The i.i.d.\ model considered here serves as a natural starting point for the study of adaptive $p$-to-$e$ calibration. While practical sequential testing problems often involve dependence and non-stationarity, empirically adaptive $p$-to-$e$ calibration remains relatively underexplored even in the iid setting. Moreover, under most alternatives arising in standard testing problems, $p$-values tend to concentrate near zero and exhibit a decreasing density, aligning naturally with our modeling assumption. For instance, in one-sided Gaussian mean testing, $H_0:\mu=0$ versus $H_1:\mu>0$ with $Z\sim N(\mu,1)$, the corresponding $p$-value $\Phi(-Z)$ has a monotone decreasing density under the alternative.

In the above iid model, the log-optimal calibrator is given by $h_t^{\mathrm{opt}}=q$ for all $t\ge1$. Indeed, for any $h\in\Dcal$,
\[
\E_\bQ\!\left[\log h(P_t)\mid\mathcal F_{t-1}\right]
\le\E_\bQ\!\left[\log q(P_t)\mid\mathcal F_{t-1}\right]-\mathrm{KL}(q\|h)\le\E_\bQ\!\left[\log q(P_t)\mid\mathcal F_{t-1}\right].
\]
Consequently, we obtain the following interesting fact, which appears implicitly in prior work but has not been systematically developed:
\begin{quote}
    \centering \emph{
    % All calibrators  are (sub-)probability Lebesgue-density functions on $[0,1]$. Admissible calibrators are decreasing densities. 
    The log-optimal calibrator is given by $q$. Therefore, approximating the log-optimal calibrator from the observed $p$-values reduces to the problem of online monotone density estimation.}
\end{quote}

We therefore use the two online monotone density estimators introduced in Section~\ref{sec:online_monotone_density_estimation} as empirically adaptive calibrators:
\begin{itemize}[itemsep=0.7pt, topsep=1.2pt]
    \item \textbf{OG calibrator.}
    The estimator $\hat{f}^{\,\mathrm{OG}}_{a,b}$ yields a natural adaptive calibrator. Its online likelihood maximization can be interpreted as empirical log-optimal calibration, where the conditional distribution of the p-value $P_t$ under $\bQ$ is replaced by the empirical distribution of past observations, $(t-1)^{-1}\sum_{s=1}^{t-1}\delta_{P_s}$.
    \item \textbf{EA calibrator.}
    Alternatively, $\hat{f}^{\,\mathrm{EA}}_{a,b}$ defines an adaptive calibrator by exponentially weighting a finite collection of expert calibrators, with
    weights updated according to accumulated log-wealth from past $p$-values.
\end{itemize}

Both $\hat{f}^{\,\mathrm{OG}}_{a,b}$ and $\hat{f}^{\,\mathrm{EA}}_{a,b}$ yield valid $e$-processes via \eqref{eq:e-martingale-p}, and therefore induce valid sequential tests. In practice, a natural choice is to take $a>0$ and $b=\infty$ to ensure that calibrators never output zero while allowing unbounded values to enable strong evidence accumulation.

\subsection{Optimality of empirically adaptive calibrators}

We now study optimality properties of these empirically adaptive calibrators, establishing their effectiveness as practical procedures for sequential testing.
In particular, we show that both the OG and EA algorithms are asymptotically log-optimal calibrators.
\begin{theorem}[Asymptotic log-optimality]\label{thm:asymptotic_log_optimality}
Suppose $P_1,P_2,\ldots\stackrel{iid}{\sim}\bQ$, where $q\in\Dcal_{a,b}$ for some $0<a<b<\infty$. Then for any calibrator $\hat h\in\{\hat{f}^{\,\mathrm{OG}}_{a,b},\hat{f}^{\,\mathrm{EA}}_{a,b}\}$,
\[
\lim_{n\to\infty}\frac{1}{n}\Bigl(\log M_n(\hat h)-\log M_n(h^{\mathrm{opt}})\Bigr)=0\qquad \bQ\textnormal{-almost surely},
\]
where $h^{\mathrm{opt}}$ denotes the log-optimal calibrator from Definition~\ref{defn:log_optimal_calibrator}. 
% \RW{I think you meant Definition 7}Thanks!!
\end{theorem}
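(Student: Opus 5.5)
We write the difference of log-wealths as a martingale term plus a predictable KL term. Since $M_n(\hat h)=\prod_{t\le n}\hat h_t(P_t)$ and $h^{\mathrm{opt}}_t=q$,
\[
\frac1n\bigl(\log M_n(\hat h)-\log M_n(h^{\mathrm{opt}})\bigr)=\frac1n\sum_{t=1}^n \xi_t-\frac1n\sum_{t=1}^n \mathrm{KL}(q\|\hat h_t),
\qquad
\xi_t:=\log\frac{\hat h_t(P_t)}{q(P_t)}+\mathrm{KL}(q\|\hat h_t).
\]
Because $\hat h_t$ is $\mathcal F_{t-1}$-measurable, $\E_\bQ[\xi_t\mid\mathcal F_{t-1}]=0$. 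Both $\hat h_t$ and $q$ lie in $\Dcal_{a,b}$: for OG this holds by definition, and for EA because a convex combination of experts in $\Dcal_{a,b}$ stays in $\Dcal_{a,b}$. Hence $|\log(\hat h_t/q)|\le V=\log(b/a)$ and $|\xi_t|\le 2V$, so $(\xi_t)$ is a bounded martingale difference sequence. Azuma–Hoeffding gives $\P(|\sum_{t\le n}\xi_t|>\epsilon n)\le 2e^{-\epsilon^2 n/(8V^2)}$, which is summable in $n$. By Borel–Cantelli, $\frac1n\sum\xi_t\to0$ almost surely. The martingale strong law for bounded increments would work equally well.

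It remains to show $\frac1n\sum_{t\le n}\mathrm{KL}(q\|\hat h_t)\to0$ almost surely. Denote $K_t:=\mathrm{KL}(q\|\hat h_t)\in[0,V]$, so the sum is nonnegative. Its expectation is $\mathrm{Risk}_\bQ(\hat h;n)\le\Gamma n^{1/3}$ by Theorem~\ref{thm:excess_KL_risk_OG_and_EA}, so the Cesàro average converges to $0$ in $L^1$. This alone does not give almost sure convergence, so we upgrade it.

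\emph{Option (a), direct.} Markov's inequality gives $\P(S_n>\epsilon n)\le\Gamma n^{-2/3}/\epsilon$, where $S_n=\sum_{t\le n}K_t$. This is summable along the geometric subsequence $n_j=2^j$, so Borel–Cantelli yields $S_{n_j}/n_j\to0$ almost surely. For $n_j\le n<n_{j+1}$, monotonicity of $S_n$ (since $K_t\ge0$) gives $S_n/n\le S_{n_{j+1}}/n_j= 2S_{n_{j+1}}/n_{j+1}\to0$.

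For EA there is a subtlety: the expert class $\mathcal E_{m_n}$ in Theorem~\ref{thm:excess_KL_risk_OG_and_EA} may depend on the horizon $n$. To handle this, we fix a horizon-free version, e.g. a doubling scheme over $n_j$ or a prior mixture over the classes $\mathcal E_{m_{n_j}}$ with weights $\propto j^{-2}$. Either costs at most an additive $O(\log j)$ in risk and keeps the bound at $O(n^{1/3})$, after which the same subsequence argument applies. For OG no such issue arises.

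\emph{Option (b), backup.} If the subsequence step were delicate, we would argue pointwise consistency instead. For OG, $\hat h_t$ is the constrained Grenander estimator of the first $t-1$ points. By Lemma~\ref{lem:KL_rate_of_montone_mle} and the same Markov/subsequence device, $K_t\to0$ almost surely, and then Cesàro averaging gives the claim.

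The main obstacle is this step: turning the expected-risk bound into an almost sure statement, including making EA's expert class horizon-independent. The monotone-sum trick with geometric subsequences is the first thing we would try. Combining the two limits then gives the theorem.
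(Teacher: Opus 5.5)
Your Option (a) is the paper's proof. It uses the same split into a bounded martingale term, handled by Azuma--Hoeffding and Borel--Cantelli, and the cumulative term $\sum_{t\le n}\mathrm{KL}(q\|\hat h_t)$, controlled via Theorem~\ref{thm:excess_KL_risk_OG_and_EA}, Markov's inequality along $n_j=2^j$, and monotonicity in $n$ to interpolate. You are right that this monotonicity step needs a single horizon-free EA sequence, a point the paper passes over even though its EA expert class depends on $n$, and your doubling or $j^{-2}$-weighted mixture fix handles it; Option (b) would not work as written, since $K_t$ is not monotone and Markov only gives $K_{2^j}\to0$, but you do not need it.
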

Here, by   $\hat f^{\mathrm{EA}}_{a,b}$ we refer to the EA calibrator based on the set of experts $\mathcal E_{k,n,\beta}$ with  $k=\lfloor(n/\log{n})^{1/2}\rfloor$. Theorem~\ref{thm:asymptotic_log_optimality} shows that the proposed calibrators asymptotically achieve the same per-sample log-wealth as the optimal calibrator tailored to the true distribution $\bQ$. As a consequence, for both procedures, the accumulated evidence diverges under any non-uniform monotone alternative.
% \RW{How is the EA expert set specified? I think the optimality  result relies on it. I guess we missed a condition like the one in Th2.} Thanks for reminding! 

% \RW{Question, but we don't need to answer it now: Does this hold true under the same condition $q\in\Dcal_{a,b}$  but $\hat h\in\{\hat{f}^{\,\mathrm{OG}}_{0,\infty},\hat{f}^{\,\mathrm{EA}}_{0,\infty}\}$?} RH: as per my understanding, yes! We have to prove an additional lemma which will in spirit say that if the true density q is bounded above and below, then the Grenander estimator will also be bounded away and below. One small caveat is the bounds on the classical Grenander would look like a-$\delta$ and $b+\delta$ with high probability. So ideally, for EA calibrator..we have to give a little slack while defining the expert class. This is all assuming that the Lemma I am hypothesizing is actually true!

\begin{corollary}[Almost sure divergence of log-wealth]\label{cor:consistency_calibrators}
Suppose $P_1,P_2,\ldots\stackrel{iid}{\sim}\bQ$, where
$q\in\Dcal_{a,b}$ for some $0<a<b<\infty$ and $\int_0^1 q(u)\log q(u)\,\d u>0$.
Then for any $\hat h\in\{\hat{f}^{\,\mathrm{OG}}_{a,b},\hat{f}^{\,\mathrm{EA}}_{a,b}\}$,
\[
\lim_{n\to\infty}\frac{1}{n}\log M_n(\hat h)=\int_0^1 q(u)\log q(u)\,\d u
\qquad \bQ\textnormal{-almost surely}.
\]
In particular, $\log M_n(\hat h)\to\infty$ $\bQ$-almost surely as $n\to\infty$.
\end{corollary}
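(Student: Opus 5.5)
The plan is to reduce the corollary to Theorem~\ref{thm:asymptotic_log_optimality} together with the strong law of large numbers applied to the oracle log-wealth. By definition \eqref{eq:e-martingale-p}, $\log M_n(h^{\mathrm{opt}})=\sum_{t=1}^n \log h_t^{\mathrm{opt}}(P_t)$. In the iid model we showed $h_t^{\mathrm{opt}}=q$ for every $t$, so
\[
\log M_n(h^{\mathrm{opt}})=\sum_{t=1}^n \log q(P_t).
\]
Since $q\in\Dcal_{a,b}$ with $0<a<b<\infty$, the summands $\log q(P_t)$ are iid and bounded in $[\log a,\log b]$. Hence they are integrable, with mean $\E_\bQ[\log q(P_1)]=\int_0^1 q(u)\log q(u)\,\d u$. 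The strong law of large numbers then gives
\[
\frac1n\log M_n(h^{\mathrm{opt}})\to\int_0^1 q\log q
\qquad\bQ\textnormal{-a.s.}
\]

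Next, fix $\hat h\in\{\hat f^{\,\mathrm{OG}}_{a,b},\hat f^{\,\mathrm{EA}}_{a,b}\}$ and write
\[
\frac1n\log M_n(\hat h)=\frac1n\bigl(\log M_n(\hat h)-\log M_n(h^{\mathrm{opt}})\bigr)+\frac1n\log M_n(h^{\mathrm{opt}}).
\]
By Theorem~\ref{thm:asymptotic_log_optimality}, the first term tends to $0$ $\bQ$-almost surely. Intersecting the two probability-one events shows that the second term converges to $\int_0^1 q\log q$, which yields the claimed limit almost surely.

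Finally, for divergence: on this probability-one event, the limit $c:=\int_0^1 q\log q>0$ by assumption. So for all large $n$ we have $\log M_n(\hat h)\ge nc/2$, which tends to $\infty$. Note that $\int q\log q=\mathrm{KL}(q\|\mathrm{Unif})\ge 0$, with equality only when $q$ is uniform, so the positivity hypothesis simply excludes the uniform (null) case.

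No step here is a real obstacle, since all the difficulty is absorbed by Theorem~\ref{thm:asymptotic_log_optimality}. The only points to check are that the oracle calibrator is indeed the constant sequence $q$, and that boundedness of $q$ away from $0$ and $\infty$ makes the SLLN applicable.
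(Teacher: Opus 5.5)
Your proposal is correct and follows the paper's proof: the strong law of large numbers gives the limit of the oracle log-wealth $\frac1n\sum_{t}\log q(P_t)$ (using $h^{\mathrm{opt}}_t=q$), and Theorem~\ref{thm:asymptotic_log_optimality} transfers that limit to $\hat h$. You also spell out the integrability of $\log q(P_1)$ and the final divergence step, which the paper leaves implicit.
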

We observe that the condition $\mathrm{KL}(q\|f_U)=\int_0^1 q(u)\log q(u)\,\d u>0$, where $f_U$ denotes the density of $\mathrm{Unif}[0,1]$, simply requires $\bQ\not\overset{d}{=} \mathrm{Unif}[0,1]$. Under such alternatives, we do expect the log-wealth to diverge and the associated sequential test to have a finite detection time almost surely.
% \RW{Theorem \ref{thm:asymptotic_log_optimality}
% and Corollary \ref{cor:consistency_calibrators} are connected to the log-optimality of the empirically adaptive e-process; see \citet[Theorem 7.22]{ramdas2024hypothesis}
% and \citet[Theorem 3]{wang2025backtesting}.
% In the latter two results, e-processes are constructed based on e-values as in \eqref{eq:e-martingale} instead of p-values as in \eqref{eq:e-martingale-p}. 
% }\AR{I would rather not say too much about it!}

\section{Numerical experiments}\label{sec:simulations}
Now, we study the empirical performance of the proposed online estimators through a series of numerical experiments.

\subsection{Empirical evaluation of OG and EA in online monotone density estimation }\label{sec:empirical_study}

We compare the cumulative log-likelihood trajectories of the OG and EA algorithms against two benchmarks: the true density $q$ and the offline Grenander estimator $\tilde f_n^{\mle}$ computed in hindsight. We consider sample paths of length $n=1000$ and report averages over $B=50$ independent replications. For the EA procedure, we use a structured collection of monotone histogram experts with $2$--$4$ breakpoints chosen from a coarse grid on $(0.1,0.2,\ldots,0.9)$ and geometrically decaying heights, normalized to integrate to one. Although substantially smaller than the theoretical construction in Section~\ref{sec:bounding_regret_for_EA}, this expert class remains sufficiently expressive in practice and performs well empirically.

\begin{figure}[t]
    \centering
    \includegraphics[width=0.92\linewidth]{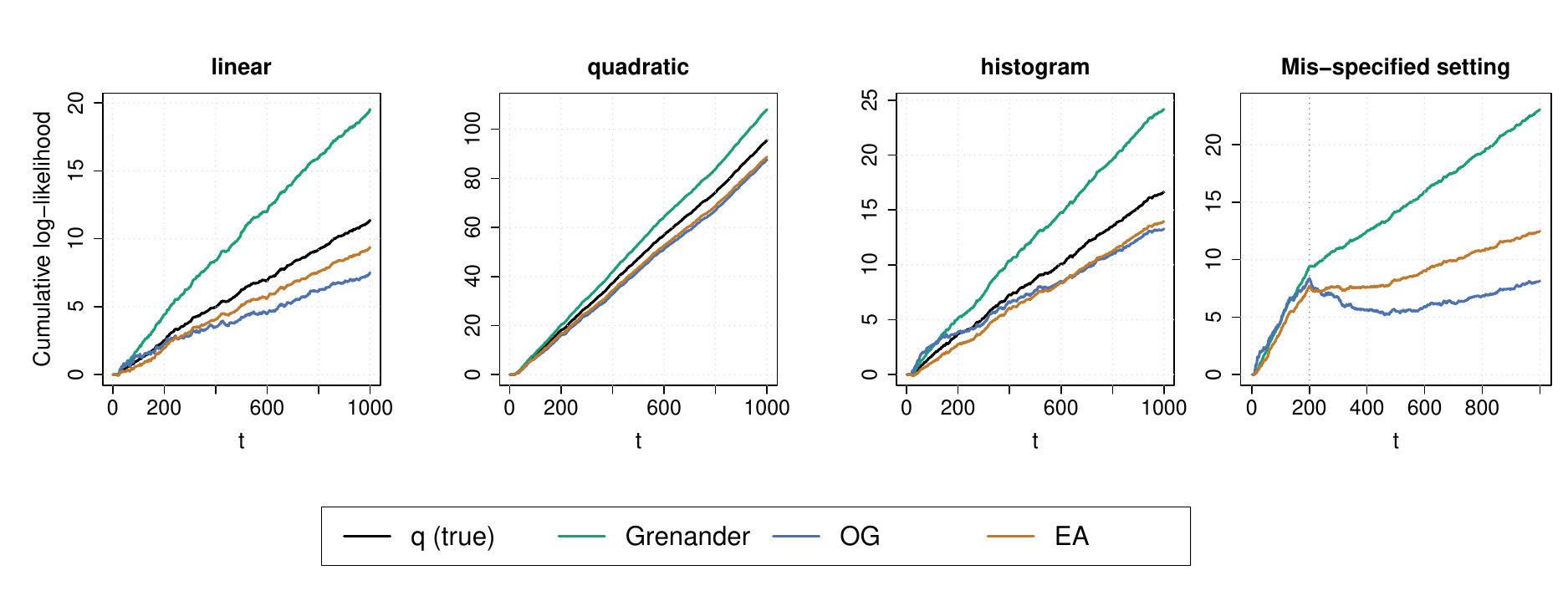}
    \caption{Cumulative log-likelihood trajectories in well-specified and mis-specified stochastic settings.}
    \label{fig:combined_fig}
\end{figure}

\paragraph{Well-specified stochastic setting.}
We start by considering the well-specified regime, where $X_1,X_2,\ldots \stackrel{iid}{\sim} \bQ$ with a monotone density $q\in\Dcal$. We examine three representative models: (1) a linear density $q(u)=5/4-u/2$, (2) a quadratic density $q(u)=3(1-u)^2$, and (3) a piecewise-constant density on four equal-width bins with heights $5/4,\ 13/12,\ 11/12,$ and $3/4$.

Figure~\ref{fig:combined_fig} reports the resulting cumulative log-likelihood trajectories. As expected, the offline Grenander estimator is optimistic and attains the largest cumulative likelihood. Both online procedures closely track the true density, while EA typically exhibits smaller regret relative to the offline Grenander benchmark, indicating improved finite-sample adaptivity. This difference is most pronounced in the linear model, where the monotone structure is relatively mild; in the remaining settings, the two methods behave similarly.

\paragraph{Mis-specified setting.}
We next consider a mis-specified scenario in which the data-generating density changes over time. Specifically, observations are initially drawn iid from the linear monotone density $q(u)=\delta_1-(\delta_1-\delta_0)u$ with $(\delta_0,\delta_1)=(0.5,1.5)$ up to a change point $\tau=200$, after which the parameters switch to $(\delta_0,\delta_1)=(0.75,1.25)$.

The rightmost panel of Figure~\ref{fig:combined_fig} illustrates the resulting cumulative log-likelihood trajectories. In this setting, the adaptive nature of EA becomes more evident: by reweighting experts according to recent observations, EA adjusts more rapidly to the evolving density and incurs substantially smaller regret. In contrast, OG is more strongly influenced by pre-change observations and therefore adapts more slowly following the shift. This experiment highlights the advantage of expert aggregation when the monotone or iid assumptions are only approximately satisfied.

\subsection{Sequential changepoint detection}
We now study the performance of the OG and EA calibrators in a sequential changepoint detection problem. We simulate a sequence $X_1,X_2,\ldots,X_n$ with $n=2000$, where
\[
X_t\sim N(0,1)\quad \textnormal{for } t\le \tau,
\qquad
X_t\sim N(\mu,1)\quad \textnormal{for } t>\tau,
\]
with changepoint $\tau=500$. At each time step $t\in[n]$, we construct conformal $p$-values as
\[
p_t=\frac{1}{t}\left(\sum_{i<t} \One{X_i>X_t}
+U_t\sum_{i=1}^t\One{X_i=X_t}\right),
\qquad
(U_1,\ldots,U_n)\stackrel{iid}{\sim}\mathrm{Unif}[0,1].
\]
This provides one of the simplest conformal $p$-value constructions in this setting. Observe that for $t\le\tau$, the sequence $(X_1,\ldots,X_t)$ is exchangeable, and hence $p_t\sim \mathrm{Unif}[0,1]$. After the changepoint, however, the observations become systematically larger; this forces the conformal $p$-values to concentrate near zero and, marginally, exhibit a near-decreasing density.

The resulting sequence of conformal $p$-values is incorporated into~\eqref{eq:e-martingale-p} to construct sequential test martingales. A classical approach in the conformal martingale literature is to use a fixed calibrator, that is, $h_s=h$ for all $s\in[n]$; see \citet{vovk2003testing,volkhonskiy2017inductive}. Following \citet[Section~3.6]{volkhonskiy2017inductive}, we consider the following two fixed baseline calibrators:
\begin{itemize}[itemsep=0.8pt, topsep=1.2pt]
    \item \textbf{Mixture-power calibrator:}
    \[
    h_{\mathrm{mix}}(p)=\int_0^1 \varepsilon p^{\varepsilon-1}\,\mathrm{d}\varepsilon,
    \]
    which corresponds to a continuous mixture of power betting functions.
    
    \item \textbf{Constant-step calibrator:}
    \[
    h_{\mathrm{step}}(p)=\begin{cases}
        1.1 & \text{if~} p>0.5\\
        0.9 & \text{otherwise}.
    \end{cases}
    \]
\end{itemize}
We compare these two fixed-calibrator baselines with the adaptive OG and EA calibrators. For the EA procedure, we use the same expert class as in Section~\ref{sec:empirical_study}.

We consider three choices of the post-change mean, namely $\mu\in\{0.5,1,2\}$, corresponding respectively to weak, moderate, and strong mean shifts, thereby ranging from harder to easier changepoint detection problems. For each setting, we generate $100$ independent repetitions and report the average cumulative log-wealth trajectories of the competing test martingales in Figure~\ref{fig:changepoint_detection}

\begin{figure}[t] 
\centering 
\includegraphics[width=0.7\linewidth]{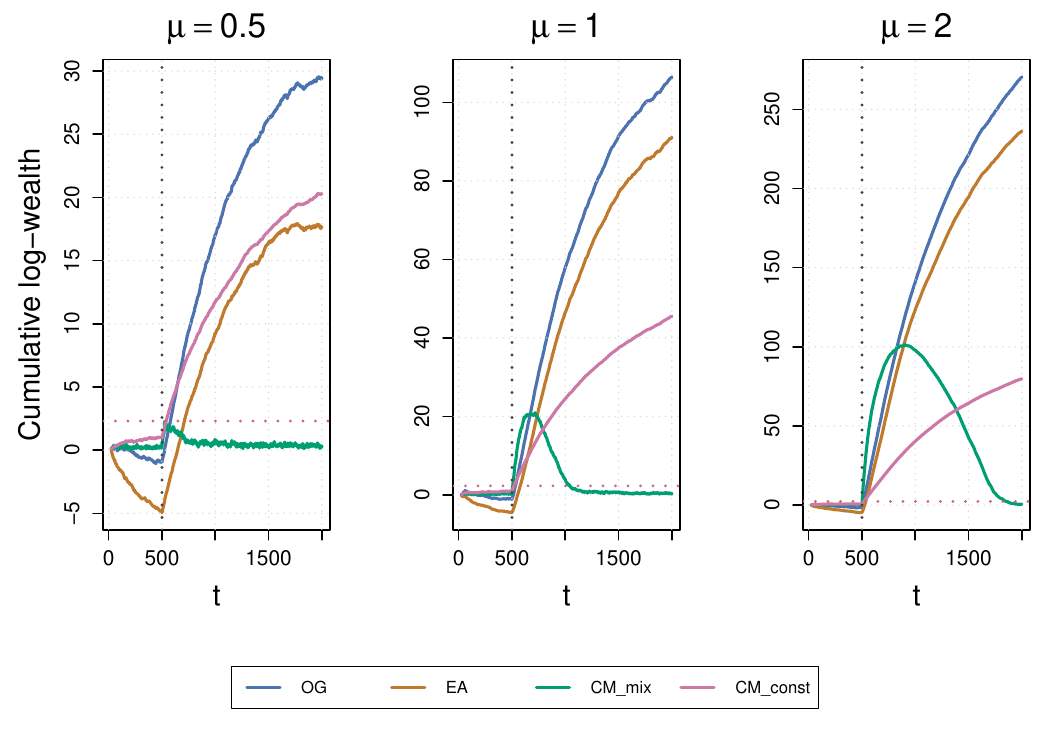} 
\caption{Average cumulative log-wealth trajectories for sequential changepoint detection under weak ($\mu=0.5$), moderate ($\mu=1$), and strong ($\mu=2$) mean shifts, shown from left to right. The vertical dashed line marks the changepoint location $\tau=500$, while the horizontal red line denotes the rejection threshold $\log(1/\alpha)$ with $\alpha=0.1$.} \label{fig:changepoint_detection} 
\end{figure}

\paragraph{Discussion on the results.}
Across all settings, both the OG and EA calibrators perform robustly, and the corresponding test martingales eventually cross the rejection threshold, correctly detecting the change in the data-generating process. As expected, weaker shifts require substantially longer detection times. 

In contrast, the performance of the fixed calibrators heavily depends on the signal strength. In the strong-shift regime ($\mu=2$), the post-change $p$-values become extremely small, which is effectively exploited by the mixture-power calibrator (denoted as \textnormal{CM\_mix} in Figure~\ref{fig:changepoint_detection}), making it competitive with the OG and EA procedures and substantially stronger than the constant-step calibrator. On the other hand, in the weak-shift regime ($\mu=0.5$), the post-change $p$-values deviate only mildly from uniformity. In this case, the constant-step calibrator (denoted as \textnormal{CM\_const} in Figure~\ref{fig:changepoint_detection}) performs more favorably than the mixture-power alternative.

We also observe that, since the sequence length is relatively large ($n=2000$) compared to the changepoint location ($\tau=500$), the conformal $p$-values gradually become closer to uniform at later time points. Consequently, the corresponding test martingales may lose wealth after an initial growth phase. This effect is particularly pronounced for the mixture-power calibrator.

\section{Discussion}
In this work, we introduced two online algorithms for monotone density estimation. In the well-specified stochastic setting, where the true density $q$ is monotone, we derived bounds on the excess KL-risk of both procedures. We also obtained a pathwise regret bound for the EA algorithm, comparing its performance to that of the offline Grenander estimator computed in hindsight.

We further showed that the problem of p-to-e calibration in sequential testing can be naturally formulated as an instance of online monotone density estimation. This connection allowed us to adapt the proposed online algorithms as data-driven calibrators and to establish their optimality.

A promising direction for future work is to analyze the behavior of these empirically adaptive calibrators under non-iid alternatives, which would further enhance their applicability in practical sequential testing settings.

\section*{Acknowledgments}
Rohan Hore and Aaditya Ramdas acknowledge the funding from the Sloan Fellowship.
Ruodu Wang is supported by the Natural Sciences and Engineering Research Council of Canada (CRC-2022-00141, RGPIN-2024-03728). The authors also thank Aditya Guntuboyina for pointers to relevant literature.

\bibliographystyle{chicago}
\bibliography{reference}

\appendix
\section{Proofs}

\subsection{Proof of results from Section~\ref{sec:online_monotone_density_estimation}}
\subsubsection{Proof of Lemma~\ref{lem:KL_rate_of_montone_mle}}

Fix $0<a<b<\infty$ throughout the proof. We write $N_{[\,]}(u,\Dcal_{a,b},\hel)$ for the bracketing entropy of $\Dcal_{a,b}$ with respect to the Hellinger distance. By \citet[Theorem~1]{wong1995probability}, there exist universal constants $c_1,c_2>0$ such that for any $\varepsilon>0$ satisfying
\begin{equation}\label{eq:WS_entropy_condition}
\int_0^\varepsilon \sqrt{N_{[\,]}(u,\Dcal_{a,b},\hel)}\,\d u\;\le\;c_1\sqrt n\,\varepsilon^2,
\end{equation}
we have $\P_{\bQ}(\mathcal A)\le2\exp(-c_2 n\varepsilon^2)$, where
\begin{equation}\label{eq:WS_LR_bound}
\mathcal A:=\Biggl\{\sup_{f\in\Dcal_{a,b}:\,\hel(f,q)\ge\varepsilon}\sum_{s=1}^n\log\frac{f(X_s)}{q(X_s)}\ge-\,c_2 n\varepsilon^2\Biggr\}.
\end{equation}
By definition of the MLE,
\[
\sum_{s=1}^n \log \tilde{f}_{n,a,b}^\mle(X_s)\ge\sum_{s=1}^n \log q(X_s),
\]
and hence
\[
\sum_{s=1}^n\log\frac{\tilde{f}_{n,a,b}^\mle(X_s)}{q(X_s)}\ge 0.
\]
On the event $\mathcal A^c$, any density $f$ satisfying $\hel(f,q)\ge\varepsilon$ obeys
\[
\sum_{s=1}^n \log\frac{f(X_s)}{q(X_s)}<-\,c_2 n\varepsilon^2<0,
\]
and therefore cannot coincide with the MLE $\tilde{f}_{n,a,b}^\mle$. Consequently,
\[
\mathcal A^c\subseteq\bigl\{\hel(\tilde{f}_{n,a,b}^\mle,q<\varepsilon\bigr\}.
\]
Thus, for any $\varepsilon>0$ satisfying~\eqref{eq:WS_entropy_condition},
\[
\P_{\bQ}\!\left(\hel(\tilde{f}_{n,a,b}^\mle,q)\ge\varepsilon\right)
\le 2\exp(-c_2 n\varepsilon^2).
\]
By \citet{gao2007entropy}, the bracketing entropy of $\Dcal_{a,b}$ in Hellinger distance satisfies
\[
N_{[\,]}(u,\Dcal_{a,b},\hel)\le C\,u^{-1},
\]
for a constant $C>0$ depending only on $a$ and $b$.
Therefore,
\[
\int_0^\varepsilon \sqrt{N_{[\,]}(u,\Dcal_{a,b},\hel)}\,\d u \le C\int_0^\varepsilon u^{-1/2}\,\d u=2C\,\varepsilon^{1/2}.
\]
The entropy condition~\eqref{eq:WS_entropy_condition} thus holds for all
$\varepsilon\ge\varepsilon_n$, where $\varepsilon_n\asymp n^{-1/3}$.
It follows that
\[
\E_{\bQ}\!\left[\hel^2(\tilde{f}_{n,a,b}^\mle,q)\right]\le\varepsilon_n^2+\int_{\varepsilon_n^2}^1 2\exp(-c_2 n s)\,\d s
\le\varepsilon_n^2+\frac{2}{c_2 n}\le C'(a,b)\,n^{-2/3},
\]
for an appropriate constant $C'(a,b)>0$.
Finally, by Lemma~\ref{lem:L2_H_KL_equivalence}, since
$q,\tilde{f}_{n,a,b}^\mle\in\Dcal_{a,b}$,
\[
\mathrm{KL}\bigl(q\|\tilde{f}_{n,a,b}^\mle\bigr)\le C''(a,b)\, \hel^2\bigl(q,\tilde{f}_{n,a,b}^\mle\bigr)
\]
for another constant $C''(a,b)>0$.
Taking expectations yields
\[
\E_\bQ\!\left[\mathrm{KL}\bigl(q\|\tilde{f}_{n,a,b}^\mle\bigr)\right]
\le \kappa(a,b)\,n^{-2/3},
\]
for some constant $\kappa(a,b)\in(0,\infty)$, as required.
\hfill$\blacksquare$

\subsubsection{Proof of Theorem~\ref{thm:excess_KL_risk_OG_and_EA}: bounds on excess KL-risk}

Fix $n\ge2$ and $0<a<b<\infty$ throughout.

\medskip
\noindent\textbf{Bounding the risk of the OG estimator.}
We first establish the risk bound for the online Grenander (OG) estimator.
For any $t\ge2$, note that
\[
\hat{f}^{\,\mathrm{OG}}_{t,a,b}=\tilde{f}^{\mle}_{t-1,a,b}.
\]
Recall that 
\[
\mathrm{Risk}_\bQ(\hat f;n)
=\sum_{s=1}^n\E\bigl[\mathrm{KL}(q\|\hat f_s)\bigr].
\]
Thus, by Lemma~\ref{lem:KL_rate_of_montone_mle}, for any $q\in\Dcal_{a,b}$,
\begin{align*}
\mathrm{Risk}_\bQ(\hat{f}^{\,\mathrm{OG}}_{a,b};n)
&=\sum_{s=1}^n\E_\bQ\left[\mathrm{KL}\bigl(q\|\hat{f}^{\,\mathrm{OG}}_{s,a,b}\bigr)\right] \\
&=\sum_{s=1}^n\E_\bQ\left[\mathrm{KL}\bigl(q\|\tilde{f}^{\mle}_{s-1,a,b}\bigr)\right] \\
&\le\kappa_{a,b}\sum_{s=1}^n (s-1)^{-2/3}\;\le\;\Gamma_{\mathrm{OG}}(a,b)\,n^{1/3},
\end{align*}
for some finite constant $\Gamma_{\mathrm{OG}}(a,b)>0$.
This completes the proof for the OG estimator.

\medskip
\noindent\textbf{Bounding the risk of the EA estimator.}
Fix $q\in\Dcal_{a,b}$.
Let
\[
\mathcal N(\varepsilon,\Dcal_{a,b},\hel)\subset\Dcal_{a,b}
\]
be an $\varepsilon$-net of $\Dcal_{a,b}$ with respect to the Hellinger distance $\hel$.
Since $q\in\Dcal_{a,b}$, there exists $q^\star\in\mathcal N(\varepsilon,\Dcal_{a,b},\hel)$ such that
$\hel(q,q^\star)\le\varepsilon$.
By slight abuse of notation, we write $q^\star\equiv(q^\star)_{t\ge1}$.
By definition,
\[
\mathrm{Risk}_\bQ(q^\star;n)=n\,\mathrm{KL}(q\|q^\star).
\]
By Lemma~\ref{lem:L2_H_KL_equivalence}, there exists a constant $C_1(a,b)>0$ such that
\[
\mathrm{KL}(q\|q^\star)\le C_1(a,b)\,\hel^2(q,q^\star) \le C_1(a,b)\,\varepsilon^2.
\]
Hence,
\begin{equation}\label{eq:q_star_risk}
    \mathrm{Risk}_\bQ(q^\star;n)\le C_1(a,b)\,n\varepsilon^2
\end{equation}

Next, we enumerate the elements of $\mathcal N(\varepsilon,\Dcal_{a,b},\hel)$ as
$\{g_1,\ldots,g_{m_\varepsilon}\}$, and consider the exponentially weighted estimator over this finite class:
\[
\hat{f}^{\mathrm{EA}}_{t,a,b}(x)=\sum_{g\in\mathcal N(\varepsilon,\Dcal_{a,b},\hel)}w_{t-1}(g)\,g(x),\qquad
w_{t-1}(g)\propto\frac{1}{m_\varepsilon}\prod_{s=1}^{t-1} g(X_s).
\]
By Lemma~\ref{lem:mixability_of_logloss}, we have that
\[
\sum_{s=1}^n \log \hat{f}^{\mathrm{EA}}_{s,a,b}(X_s)
\ge\max_{g\in\mathcal N(\varepsilon,\Dcal_{a,b},\hel)}\sum_{s=1}^n \log g(X_s)-\log m_\varepsilon.
\]
In particular, taking $g=q^\star$ and expectation over $X_1,\ldots,X_n\iidsim\bQ$, we obtain
\begin{equation}\label{eq:online_predictor_risk}
\E_\bQ\left[\sum_{s=1}^n \log q^\star(X_s)-\sum_{s=1}^n \log \hat{f}^{\mathrm{EA}}_{s,a,b}(X_s)\right]\le\log m_\varepsilon.
\end{equation}

Finally, by \citet[Theorem~1.1]{gao2007entropy}, the class of bounded monotone densities on $[0,1]$ satisfies
\[
\log N(\varepsilon,\Dcal,\|\cdot\|_2)\le C_2\,\varepsilon^{-1}
\]
for some constant $C_2>0$.
Since $\Dcal_{a,b}\subset\Dcal$, the same bound holds for $\Dcal_{a,b}$.
Moreover, by Lemma~\ref{lem:L2_H_KL_equivalence}, the $L_2$ norm and the Hellinger distance are equivalent on $\Dcal_{a,b}$, yielding
\[
\log m_\epsilon=\log N(\varepsilon,\Dcal_{a,b},\hel)\le C_2\,\varepsilon^{-1}.
\]
Combining \eqref{eq:q_star_risk} and \eqref{eq:online_predictor_risk}, we obtain
\[
\mathrm{Risk}_\bQ(\hat{f}^{\,\mathrm{EA}}_{a,b};n)
\le C(a,b) \left(n\varepsilon^2+\frac{1}{\varepsilon}\right),
\]
for a constant $C(a,b)>0$ depending only on $a$ and $b$.
Choosing $\varepsilon=n^{-1/3}$ yields the desired bound.

\hfill$\blacksquare$

\subsection{Completing the proof of Theorem~\ref{thm:pathwise_regret_monotone_density}: pathwise regret bound}\label{app:completing_proof_of_pathwise_regret}

\subsubsection{Proof of Lemma~\ref{lem:compressed_version_of_constrained_grenander}}
Fix $k\ge 1$ and set $V:=\log(b/a)$.
By Lemma~\ref{lem:bounded_grenander_histogram},
$\tilde f_{n,a,b}^{\mle}$ admits a histogram representation:
there exists $m_n\ge 1$ such that
\[
\tilde f_{n,a,b}^{\mle}(x)
=\sum_{i=1}^{m_n} w_i\,\One{t_{i-1}\le x<t_i},
\qquad
0=t_0<t_1<\cdots<t_{m_n}=1,
\]
where $b\ge w_1\ge\cdots\ge w_{m_n}\ge a$ and $t_{m_n-1}\le X_{(n)}$. 

Now, in order to define the compressed histogram, we start by setting $i'_0=0$ and, for $\ell\ge 1$,
\[
i'_\ell:=\max\Bigl\{
j\in\{i'_{\ell-1}+1,\ldots,m_n\}:
\log w_{i'_{\ell-1}+1}-\log w_j \le V/k
\Bigr\}.
\]
If $i'_\ell=m_n$, the recursion terminates. Let $L$ be the resulting number of blocks.
Note that each time a new block begins, the log-height drops by at least $V/k$, while the total variation of $\log \tilde f_{n,a,b}^{\mle}$ is at most $V$. Hence $L\le k$.

For each block $\ell=1,\ldots,L$, define the block interval
\[
[t'_{\ell-1},t'_\ell):=[t_{i'_{\ell-1}},t_{i'_\ell}),
\qquad\text{so that}\qquad
0=t'_0<t'_1<\cdots<t'_L=1.
\]
Since $i'_L=m_n$, we have $t'_{L-1}=t_{i'_{L-1}}\le t_{m_n-1}\le X_{(n)}$.

Now, for the compressed histogram, we assign to the bin $[t'_{\ell-1},t'_\ell)$ a height proportional to the left-end height of each block, i.e.
\[
\theta_\ell =\alpha\, w_{i'_{\ell-1}+1},
\qquad \ell=1,\ldots,L,
\]
for a suitable $\alpha>0$, and we set
\[
h(x):=\sum_{\ell=1}^L \theta_\ell\,\One{t'_{\ell-1}\le x<t'_\ell}.
\]
Observe that we can choose $\alpha>0$ suitably, so that the $h(x)$ integrates to $1$, i.e.,
\[
\alpha\, \sum_{\ell=1}(t'_{\ell}-t'_{\ell-1}) w_{i'_{\ell-1}+1}=1
\]
Observe that $\ell\in\{1,\ldots,L-1\}$, by the definition of $\{i'_\ell\}_{\ell\ge 1}$, we have
$\log w_{i'_{\ell-1}+1}-\log w_{i_\ell'+1} > V/k$, and therefore
\[
\log \bar\theta_\ell-\log \bar\theta_{\ell+1}=\log w_{i'_{\ell-1}+1}-\log w_{i_\ell'+1}\ge \frac{V}{k}.
\]
This proves the first part.

For the second part, fix any $x\in[t'_{\ell-1},t'_\ell)$ for some $\ell$. Then , $\log w_{i'_{\ell-1}+1}-\log w_j
\le \frac{V}{k}$ implies that  
\[
\log(h(x)/\alpha)-\log \tilde f_{n,a,b}^{\mle}(x) \le V/k.
\]
Equivalently,
\[
 h(x)/\alpha = w_{i'_{\ell-1}+1}\le w_j e^{V/k}=\tilde{f}_{n,a,b}^{\mle}(x)e^{V/k}.
\]
This holds for any $\ell\in \{1,\ldots,L\}$ and any $x\in[t'_{\ell-1},t'_\ell)$. Hence
\[
\frac{1}{\alpha}=\int_0^1 h(u) du\le e^{V/k}\int_0^1 \tilde{f}_{n,a,b}^{\mle}(u) du= e^{V/k}
\]
Hence, 
\[
|\log \alpha|\le \frac{V}{k}.
\]
Therefore, for all $x\in[0,1]$,
\[
\bigl|\log \tilde f_{n,a,b}^{\mle}(x)-\log h(x)\bigr|
=
\bigl|\log \tilde f_{n,a,b}^{\mle}(x)-\log(h(x)/\alpha)+\log
(\alpha) \bigr|
\le \frac{2V}{k}.
\]

Finally, for the observed sample path $(x_1,\ldots,x_n)$,
\[
\bigl|\mathcal L(\tilde f_{n,a,b}^{\mle},n)-\mathcal L(h,n)\bigr|
=\left|\sum_{t=1}^n
\log \tilde f_{n,a,b}^{\mle}(x_t)-\log h(x_t)\right|\le\frac{2nV}{k}.
\]
This completes the proof.
\hfill $\blacksquare$
\subsubsection{Proof of Lemma~\ref{lem:discretize_hist_into_net}}
\noindent \textbf{Proof of first part:}
Given a $f$ 
\[
f(x)=\sum_{j=1}^r \theta_j\,\One{t_{j-1}\le x<t_j},
\qquad 0=t_0<t_1<\cdots<t_r=1,\qquad r\le k,
\]
as in the theorem hypothesis, we construct $g\in \mathcal{E}_{k,n,\beta}$ as follows whenever \eqref{eq:monotonicity_condition_lemma} holds:
\\[0.5em]
\noindent \textbf{\emph{Endpoint approximation:}}
First, for each $j=1,\ldots,r-1$, define the left-rounded endpoints
\[
t_j' := \max\{u\in\mathcal B_n:\ u\le t_j\}.
\]
Then $t_j-\Delta_b\le t_j'\le t_j$. Set $t_0'=0$ and $t_r'=1$.
\\[0.5em]
\noindent \textbf{\emph{Log-height approximation:}}
Next, for $j=1,\ldots,r-1$, define the down-rounded log-heights
\[
\ell_j' := \max\{\ell\in\Lambda_{n,a,b}:\ \ell\le \log\theta_j\},
\qquad \theta_j' := e^{\ell_j'}.
\]
We write $\eta:=V\Delta_b$.
By construction, $\log\theta_j-\eta\le \ell_j'\le \log\theta_j$. By \eqref{eq:monotonicity_condition_lemma}, $\eta<1$ and
\begin{equation}\label{eq:closeness_of_theta_and_theta'}
    \theta_j\ge \theta_j'\ge e^{-\eta}\theta_j
\qquad\text{and}\qquad
\theta_j-\theta_j'\le (1-e^{-\eta})\theta_j\le b\eta.
\end{equation}
Moreover, since the map $\ell\mapsto \max\{\ell'\in\Lambda_{n,a,b}:\ell'\le \ell\}$ is monotone,
\[
b\ge \theta_1'\ge\cdots\ge \theta_{r-1}'\ge a.
\]

\noindent\textbf{\emph{Final height assignment:}}
Finally, we define $\theta_r'$ by normalization:
\[
\theta_r'
:=
\frac{1-\sum_{j=1}^{r-1}\theta_j'(t_j'-t_{j-1}')}{1-t_{r-1}'}.
\]
Consequently, we define the histogram
\[
g(x):=\sum_{j=1}^r \theta_j'\,\One{t_{j-1}'\le x<t_j'}.
\]
In order to show that $g\in \mathcal{E}_{k,n,\beta}$ we need to show that $\theta'_r\ge a$ and $\theta'_{r-1}\ge \theta'_r$.

We first show that $\theta_r'\ge \theta_r\ge a$. Suppose for a contradiction that $\theta_r'< \theta_r$.
Since $t_j'\le t_j$ and $\theta_j'\le \theta_j$ for all $j\le r-1$, the left-rounded partition and down-rounded heights imply the pointwise domination $g(x)\le f(x)$ for all $x\in[0,1]$.
In particular,
\[
\sum_{j=1}^{r}\theta_j'(t_j'-t_{j-1}')
=
\int_0^{1} g(x)\,dx
<
\int_0^{1} f(x)\,dx
=1,
\]
This is a contradiction, since $\int_0^{1} g(x)\,dx=1$ by definition.
This proves that $\theta_r'\ge a$.

To prove that $\theta'_{r-1}\ge \theta'_r$, we start by upper bounding  $\theta_r'$. Using normalization for $f$ and $g$, we may write
\[
\theta_r'
=
\frac{\theta_r(1-t_{r-1})}{1-t_{r-1}'}
+
\frac{\sum_{j=1}^{r-1}\theta_j(t_j-t_{j-1})-\sum_{j=1}^{r-1}\theta_j'(t_j'-t_{j-1}')}{1-t_{r-1}'}.
\]
Since $t_{r-1}'\le t_{r-1}$, the first term is at most $\theta_r$.
For the second term, we bound the numerator by
\begin{align*}
&\sum_{j=1}^{r-1}\theta_j(t_j-t_{j-1})-\sum_{j=1}^{r-1}\theta_j'(t_j'-t_{j-1}') \\
&\hspace{1cm}\le
\sum_{j=1}^{r-1}(\theta_j-\theta_j')(t_j-t_{j-1})
+
\sum_{j=1}^{r-1}\theta_j'\bigl(|t_j'-t_j|+|t_{j-1}'-t_{j-1}|\bigr) \\
&\hspace{1cm}\le
b\eta + 2b(r-1)\Delta_b
\le
b(\eta+2r\Delta_b),
\end{align*}
where we used $|t_j'-t_j|\le\Delta_b$ and $|t_{j-1}'-t_{j-1}|\le\Delta_b$.
Finally, since $1-t_{r-1}\ge n^{-\gamma}$ and $t_{r-1}'\le t_{r-1}$, we have $1-t_{r-1}'\ge n^{-\gamma}$, and therefore
\[
\theta_r'
\le \theta_r + b(\eta+2r\Delta_b)n^{\gamma}\le \theta_r + b(\eta+2k\Delta_b)n^{\gamma}.
\]
By the theorem hypothesis, by \eqref{eq:monotonicity_condition_lemma}, $V/k\le 1$, and
\[
\theta_{r-1}-\theta_r
=
\theta_r\bigl(e^{\log\theta_{r-1}-\log\theta_r}-1\bigr)
\ge
a\bigl(e^{V/k}-1\bigr)
\ge
a\frac{V}{k}.
\]
By~\eqref{eq:closeness_of_theta_and_theta'}, $\theta_{r-1}'\ge \theta_{r-1}-b\eta$ and $\theta_r'\le \theta_r+b(\eta+2k\Delta_b)n^{\gamma}$, the condition~\eqref{eq:monotonicity_condition_lemma} guarantees $\theta_r'\le \theta_{r-1}'$, and hence $g\in\mathcal E_{k,n,\beta}$. This proves the feasibility of $g$, and completes the first part.

\medskip
\noindent \textbf{proof of second part:}
Now, for the second part, we start by defining the boundary neighborhoods
\[
I_j := (t_j',\,t_j],\qquad j=1,\ldots,r-1.
\]
Each $I_j$ has length at most $\Delta_b<n^{-\beta}$, and since the sample path belongs in $\mathcal{S}_n(\beta,\gamma)$, defined in~\eqref{eq:good_set}, $I_j$ may contain at most one sample point.
Hence at most $r-1$ indices $t\in\{1,\ldots,n\}$ can change bin membership when the partition changes from $(t_j)_{j=1,\ldots, r}$ to $(t_j')_{j=1,\ldots, r}$.
In particular, we partition indices $i\in\{1,\ldots,n\}$ into two types.
\begin{itemize}
    \item If $x_i\notin\bigcup_{j=1}^{r-1} I_j$, then $x_i$ falls in the same bin under $f$ and $g$, say bin $\ell$, and
    \[
    \bigl|\log f(x_i)-\log g(x_i)\bigr|
    =
    \bigl|\log\theta_\ell-\log\theta_\ell'\bigr|
    \le \eta.
    \]
    \item If $x_i\in\bigcup_{j=1}^{r-1} I_j$, then bin membership may change. There are at most $r-1$ such indices, and for these we use the bound
    \[
    \bigl|\log f(x_i)-\log g(x_i)\bigr|\le \log(b/a)=V.
    \]
\end{itemize}
Combining the two cases yields
\[
\bigl|\mathcal L(f,n)-\mathcal L(g,n)\bigr|
\le n\eta + (r-1)V\le n\eta + (k-1)V.
\]
as claimed.
\hfill $\blacksquare$
\subsection{Proof of results from Section~\ref{sec:application_to_sequential_testing}}
\subsubsection{Proof of Theorem~\ref{thm:asymptotic_log_optimality}: asymptotic log-optimality of calibrators}
We start with recalling that, since $q\in\Dcal$, the log-optimal decreasing calibrator satisfies
\[
h_t^{\mathrm{opt}}=q\quad \text{for all}~~t\in \N.
\]
Consequently, we have
\[
\log M_n(h^{\mathrm{opt}})=\sum_{s=1}^n \log q(p_s),\qquad
\log M_n(\hat h)=\sum_{s=1}^n \log \hat h_s(p_s),
\]
for any online calibrator $\hat{h}=(\hat{h}_s)$. 
% The claim then follows by noting $\textnormal{Risk}(\hat h,n)\ge 0$ and  by Theorem~\ref{thm:expected_excess_logwealth_EWM_and_EAC}, 
% \[
% \textnormal{Risk}(\hat h,n)=\frac{1}{n}
% \mathbb{E}_\bQ\left[\sum_{s=1}^n \log q(p_s)-\sum_{s=1}^n \log \hat h_s(p_s)\right]\;\to\;0
% \qquad\text{as }n\to\infty.
% \]
% This proves the first part.
% \hspace{0.5em}
% \noindent For the second part, we start by writing
Now, we write
\[
\log M_n(\hat h)-\log M_n(h^{\mathrm{opt}})=\sum_{t=1}^n \log\frac{\hat h_t(P_t)}{q(P_t)}=: \sum_{t=1}^n \Delta_t,
\]
and define
\[
a_t:=\E_\bQ[\Delta_t\mid \mathcal F_{t-1}]=\E_\bQ\left[\log\frac{\hat h_t(P_t)}{q(P_t)}\,\middle|\,\mathcal F_{t-1}\right]
=-\mathrm{KL}(q\|\hat h_t).
\]
We decompose
\[
\sum_{t=1}^n \Delta_t = S_n - A_n, \qquad S_n:=\sum_{t=1}^n(\Delta_t-a_t),\qquad A_n:=-\sum_{t=1}^n a_t = \sum_{t=1}^n \mathrm{KL}(q\|\hat h_t).
\]
By construction, $\{S_n\}_{n\ge0}$ is a martingale with respect to
$\{\Fcal_n\}$. Since $q,\hat h_t\in\Dcal_{a,b}$ by the theorem hypothesis, we have
\[
|\Delta_t|\le \log(b/a),\qquad |\Delta_t-a_t|\le 2\log(b/a).
\]
Therefore, by the Azuma--Hoeffding inequality, for any $\varepsilon>0$,
\[
\P_\bQ\bigl(|S_n|\ge n\varepsilon\bigr)\le 2\exp\left(-\frac{n\varepsilon^2}{8(\log(b/a))^2}\right).
\]
Since $\sum_{n=1}^\infty e^{-cn}<\infty$ for any $c>0$, the first
Borel--Cantelli lemma implies
\[
\frac{S_n}{n}\to 0 \qquad \bQ\text{-almost surely}.
\]
It remains to show that $A_n/n\to 0$ almost surely. By
Theorem~\ref{thm:excess_KL_risk_OG_and_EA}, for $\hat h\in\{\hat{f}^{\,\mathrm{OG}}_{a,b},\hat{f}^{\,\mathrm{EA}}_{a,b}\}$, there exists
$C=C(a,b)>0$ such that
\[
\E_\bQ\left[A_n\right]\le C\, n^{1/3}.
\]
Consider the dyadic subsequence $n_k=2^k$. By Markov's inequality, for any $\epsilon>0$,
\[
\P_\bQ\left(\frac{A_{n_k}}{n_k}\ge\epsilon\right)\le\frac{C}{\epsilon}\,2^{-2k/3}.
\]
Since $\sum_{k\ge1}2^{-2k/3}<\infty$, the first Borel--Cantelli lemma yields
\[
\frac{1}{2^k}A_{2^k}
\to 0 \qquad \bQ\text{-almost surely}.
\]
Finally, since $\sum_{t=1}^n \mathrm{KL}(q\|\hat h_t)$ is non-decreasing in $n$,
for $2^k\le n<2^{k+1}$,
\[
\frac{A_n}{n}\le 2\cdot \frac{A_{2^{k+1}}}{2^{k+1}}
\]
which converges to zero almost surely. Combining the two parts proves
\[
\frac{1}{n}\bigl(\log M_n(\hat h)-\log M_n(h^{\mathrm{opt}})\bigr)\to 0
\qquad \bQ\text{-almost surely}.
\]
\hfill $\blacksquare$

\subsubsection{Proof of Corollary~\ref{cor:consistency_calibrators}}
Since $h_t^{\mathrm{opt}}=q\quad \text{for all}~~t\in \N$, by the strong law of large numbers, we have 
\[
\frac{1}{n}\log M_n(h^{\mathrm{opt}})=\frac{1}{n}\sum_{s=1}^n \log q(p_s)\overset{\textnormal{a.s.}}{\longrightarrow} \E_\bQ[\log q]=\int_0^1 q(u) \log q(u) \d u.
\]
By Theorem~\ref{thm:asymptotic_log_optimality}, the proof follows.
\hfill $\blacksquare$
\section{Supporting lemmas}\label{app:supplementary_results}

\begin{lemma}[Histogram form of the constrained Grenander]
\label{lem:bounded_grenander_histogram}
Fix $n\in\mathbb N$ and let $0=x_{(0)}<x_{(1)}<\cdots<x_{(r_n)}<x_{(r_n+1)}=1$ denote the distinct order statistics of the sample path $x_1,\ldots,x_n$, where $r_n\le n$. Further, fix $0<a\le b<\infty$, and let $\tilde{f}^\mle_{n,a,b}$ denote the constrained Grenander estimator, defined in~\eqref{eq:constrained_offline_grenander}.

Then $\tilde{f}^\mle_{n,a,b}$ can be chosen to be a monotone histogram of the form
\[
\tilde{f}^\mle_{n,a,b}(u) = \theta_j, \qquad u\in (x_{(j)},x_{(j+1)}], \quad j=0,\ldots,r_n,
\]
where
\[
b\ge \theta_0\ge \theta_1\ge \cdots \ge \theta_{r_n}\ge a.
\]
In particular, the knots of the MLE may be taken at the order statistics.
\end{lemma}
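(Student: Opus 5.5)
We will prove Lemma~\ref{lem:bounded_grenander_histogram} by a projection argument: starting from any feasible $f\in\Dcal_{a,b}$, we construct a histogram $\bar f$ with knots at the order statistics, lying in $\Dcal_{a,b}$, such that $\sum_i\log \bar f(x_i)\ge\sum_i\log f(x_i)$. Maximizing over histograms then suffices. Existence of a maximizer within histograms is immediate: the histogram class is parametrized by the finite-dimensional vector $(\theta_0,\ldots,\theta_{r_n})$ ranging over a compact set (monotone, in $[a,b]^{r_n+1}$, with the normalization $\sum_j\theta_j(x_{(j+1)}-x_{(j)})=1$). The objective $\sum_i\log f(x_i)$ is continuous there, since all values are at least $a>0$. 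Nonemptiness follows because the uniform density lies in $\Dcal_{a,b}$ when $a\le1\le b$, which is forced by $\Dcal_{a,b}\neq\emptyset$.

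\textbf{Projection step.} Fix $f\in\Dcal_{a,b}$. On each cell $(x_{(j)},x_{(j+1)}]$, the only sample point whose likelihood matters is the right endpoint $x_{(j+1)}$ (for $j<r_n$). Since $f$ is non-increasing, $f(x_{(j+1)})\le f(u)$ for $u$ in that cell. Define $g(u):=f(x_{(j+1)})$ on $(x_{(j)},x_{(j+1)}]$ for $j<r_n$, and $g(u):=\underline c:=\inf_{(x_{(r_n)},1]}f\ge a$ on the last cell. Then $g$ is a non-increasing histogram with values in $[a,b]$, satisfies $g\le f$ pointwise, and $g(x_i)=f(x_i)$ at every sample point (we may take $f$ left-continuous at the sample points without loss of generality). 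Hence $\int g\le 1$.

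\textbf{Restoring normalization; the main obstacle.} We must add the deficit $1-\int g$ back without decreasing the likelihood, while keeping monotonicity and the bounds $\le b$. Our first attempt is to raise the last height $\theta_{r_n}$, which does not affect the likelihood since the last cell contains no sample point in its interior. This fails when monotonicity caps the last height at $\theta_{r_n-1}$. In that case we instead apply a pool-adjacent-style increase: increase simultaneously the trailing block of equal heights, and continue leftward as more blocks merge. Increasing heights only increases the likelihood, and monotonicity is preserved by construction. The upper cap $b$ is the real constraint. We argue that the process can always absorb the deficit before every height reaches $b$, because if all heights equalled $b$ then the integral would be $b\ge1$. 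By continuity of the integral in this monotone raising parameter, some intermediate stage has integral exactly $1$. The resulting $\bar f$ dominates $g$ pointwise, so its likelihood is at least $\sum\log f(x_i)$.

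\textbf{Conclusion.} Combining the two steps, the supremum over $\Dcal_{a,b}$ equals the supremum over monotone histograms with knots at the order statistics, which is attained. This gives the claimed form of $\tilde f^\mle_{n,a,b}$, with $b\ge\theta_0\ge\cdots\ge\theta_{r_n}\ge a$. The convention of left-open, right-closed cells matches the left-continuous version. Ties among the $x_i$ only merge cells and change nothing.
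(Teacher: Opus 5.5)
Your proof is correct, but it takes a different route from the paper's.

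The paper replaces $f$ by the derivative of the chord interpolation of its CDF $F$ at the order statistics. Equivalently, it uses the cell averages $\theta_j=(x_{(j+1)}-x_{(j)})^{-1}\int_{x_{(j)}}^{x_{(j+1)}}f$. This single step delivers everything at once:
\begin{itemize}
\item averages of a function with values in $[a,b]$ lie in $[a,b]$;
\item concavity of $F$ makes the slopes non-increasing;
\item $\tilde F(1)=F(1)=1$ gives normalization for free;
\item monotonicity of $f$ gives $\tilde f(x_{(j)})=\theta_{j-1}\ge f(x_{(j)})$.
\end{itemize}

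You instead take the lower envelope $g$ built from right-endpoint values. It agrees with $f$ exactly at the sample points but loses mass, so you need a second step to restore normalization. Your water-filling procedure is exactly $\bar f=\max(g,c^\ast)$. The map $c\mapsto\int_0^1\max(g,c)$ is continuous and non-decreasing on $[a,b]$. It equals $\int g\le 1$ at $c=a$ and $b\ge 1$ at $c=b$. Writing the step this way makes monotonicity of $\bar f$, the bounds $a\le \bar f\le b$, and the intermediate-value argument immediate.

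What each approach buys:
\begin{itemize}
\item Your route is two steps where the paper's is one. It also has to invoke $b\ge1$ (nonemptiness of $\Dcal_{a,b}$) explicitly to close the intermediate-value step, whereas the averaging construction never removes mass and so never has to restore it.
\item In exchange, you prove that the constrained MLE exists. You do this via compactness of the finite-dimensional histogram parameter set and continuity of the log-likelihood, which uses $a>0$. The paper takes existence for granted when it speaks of ``any maximizer.''
\end{itemize}

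One minor point: the left-continuity normalization of $f$ is unnecessary. The identity $g(x_{(j+1)})=f(x_{(j+1)})$ holds by definition, and $g\le f$ follows from monotonicity alone.
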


\begin{proof}
Let $f\in\Dcal_{a,b}$ be any candidate density, and define its
cumulative distribution function
\[
F(x):=\int_0^x f(u)\,du.
\]
Since $f$ is non-increasing, $F$ is concave and absolutely continuous on $[0,1]$, with
\[
a\le F'(x)=f(x)\le b \quad\text{a.e.}
\]
Next, define a piecewise-linear function $\tilde F$ by chord interpolation of $F$ at the knots $\{x_{(j)}\}_{j=0}^{r_n+1}$:
\[
\tilde F(u) := F(x_{(j)})+\theta_j\,(u-x_{(j)}), \qquad u\in[x_{(j)},x_{(j+1)}],
\]
where we define
\[
\theta_j := \frac{F(x_{(j+1)})-F(x_{(j)})}{x_{(j+1)}-x_{(j)}}.
\]
Since $F$ is concave, the slopes must satisfy
\[
\theta_0\ge \theta_1\ge \cdots \ge \theta_{r_n}.
\]
Moreover, since 
\begin{equation}\label{eq:theta_bounds}
\theta_j = \frac{1}{x_{(j+1)}-x_{(j)}}\int_{x_{(j)}}^{x_{(j+1)}} f(u)\,du
\in[a,b],
\end{equation}
$\tilde F$ is a concave CDF whose derivative
$\tilde f:=\tilde F'$ belongs to $\Dcal_{a,b}$. At each distinct observation $x_{(j)}$, by~\eqref{eq:theta_bounds},
\[
\tilde f(x_{(j)})=\theta_{j-1}\ge f(x_{(j)}).
\]
Hence, it follows that
\[
\sum_{i=1}^n \log f(x_i)\;\le\;\sum_{i=1}^n \log \tilde f(x_i).
\]

Applying this argument with
$f=\tilde{f}^\mle_{n,a,b}$ shows that any maximizer of the likelihood
may be replaced by its chord-interpolated version without decreasing the likelihood. Therefore, $\tilde{f}^\mle_{n,a,b}$ can be chosen to be piecewise
constant with knots at the order statistics, as claimed.
\end{proof}

\begin{lemma}\label{lem:min_distance_in_dimension_d}
Let $X_1,\ldots,X_n\iidsim\bQ$, where $\bQ$ is supported on $[0,1]$ and has density $q\in\Dcal_{a,b}$. Fix $\beta>2$ and $\gamma>1$.
Then, there exists $N\equiv N(\delta,\beta,\gamma)$ such that for all $n>N$,
\[
\P_\bQ\!\left(\min_{1\le i<j\le n}|X_i-X_j| \ge n^{-\beta}, \max_{1\le i\le n} X_i \le 1-n^{-\gamma}\right)\;\ge\; 1-\delta.
\]
\end{lemma}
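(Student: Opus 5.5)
The plan is to bound the complementary events separately and combine them with a union bound. We control the probability that some pair of observations is closer than $n^{-\beta}$, and the probability that the maximum exceeds $1-n^{-\gamma}$. Each should be $o(1)$ as $n\to\infty$, so for any $\delta\in(0,1)$ we can pick $N$ with their sum at most $\delta$ for all $n>N$.

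\emph{Spacing.} For a fixed pair $i<j$, condition on $X_j$. Since $q\le b$, we get
\[
\P_\bQ\bigl(|X_i-X_j|<n^{-\beta}\mid X_j\bigr)=\int_{|u-X_j|<n^{-\beta}}q(u)\,\d u\le 2b\,n^{-\beta}.
\]
A union bound over the $\binom n2\le n^2/2$ pairs gives
\[
\P_\bQ\Bigl(\min_{i<j}|X_i-X_j|<n^{-\beta}\Bigr)\le b\,n^{2-\beta},
\]
which tends to $0$ because $\beta>2$. This step uses only the upper bound $b$ on $q$.

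\emph{Boundary.} Because $q$ is non-increasing and integrates to one, $q(u)\le q(1-n^{-\gamma})\le 1/(1-n^{-\gamma})$ on $[1-n^{-\gamma},1]$. Hence
\[
\P_\bQ(X_1>1-n^{-\gamma})\le \min\{b,\,(1-n^{-\gamma})^{-1}\}\,n^{-\gamma}\le 2n^{-\gamma}
\]
for $n\ge2$, and in any case it is at most $b\,n^{-\gamma}$. A union bound over the $n$ observations gives
\[
\P_\bQ\Bigl(\max_i X_i>1-n^{-\gamma}\Bigr)\le b\,n^{1-\gamma},
\]
which tends to $0$ because $\gamma>1$.

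Adding the two bounds, the failure probability is at most $b(n^{2-\beta}+n^{1-\gamma})$. So it suffices to take $N$ large enough that this is at most $\delta$. The resulting $N$ depends on $\delta,\beta,\gamma$ and on $b$. Alternatively, the monotonicity bound above removes $b$ from the boundary term, but $b$ remains in the spacing term. This is consistent with the corollary's $N(\delta,\beta,b)$.

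There is no real obstacle here. The only point needing care is that the bounded-density assumption ($q\le b$) is what turns the near-coincidence and near-boundary events into Lebesgue-measure estimates.
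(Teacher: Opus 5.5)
Your proof is correct and follows the same route as the paper. You use a union bound over pairs with $q\le b$ for the spacing event and a union bound over the $n$ observations for the boundary event; the paper handles the boundary term via stochastic dominance of $\bQ$ by $\mathrm{Unif}[0,1]$, which plays the same role as your $q\le b$ or $q(x)\le 1/x$ bound, and your remark that $N$ must also depend on $b$ through the spacing term is accurate and consistent with the paper's own bound $b\,n^{2-\beta}$.
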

\begin{proof}
By a union bound,
\[
\P_\bQ\!\left(\min_{1\le i<j\le n}|X_i-X_j|\le \tau\right)
\le
\binom{n}{2}\,\P_{X'_1,X'_2\iidsim \bQ}(|X'_1-X'_2|\le \tau).
\]
Since $\bQ$ has density bounded above by $b$,
\[
\P_{X'_1,X'_2\iidsim \bQ}(|X'_1-X'_2|\le \tau\mid X_1)\le \frac{2bn^2}{2n^\beta}=\frac{b}{n^{\beta-2}}.
\]
Next, we note that for any distribution $\bQ$ with density $q\in \Dcal_{a,b}$, $Q$ is stochastically dominated by $\mathrm{Unif}[0,1]$. Therefore, 
\[
\P_\bQ\!\left(\max_{1\le i\le n} X_i \ge 1-n^{-\gamma}\right) \le \P_{\mathrm{Unif}[0,1]}\!\left(\max_{1\le i\le n} X_i \ge 1-n^{-\gamma}\right)
\]
Now, by a union bound, 
\[
\P_{\mathrm{Unif}[0,1]}\!\left(\max_{1\le i\le n} X_i \ge 1-n^{-\gamma}\right)\le n^{-\gamma+1}
\]
Combining both probability bounds by a union bound, the existence of a $N\equiv N(\delta,\beta,\gamma)$ is immediate, such that for all $n>N$,
\[
\P_\bQ\!\left(\min_{1\le i<j\le n}|X_i-X_j| \ge n^{-\beta}, \max_{1\le i\le n} X_i \le 1-n^{-\gamma}\right)\;\ge\; 1-\delta.
\]
This completes the proof.
\end{proof}
\begin{lemma}\label{lem:L2_H_KL_equivalence}
Let $f,g\in \mathcal D_{\gamma_0,\gamma_1}$, where $0<\gamma_0\le \gamma_1<\infty$.
Then the following bounds hold:
\begin{enumerate}
    \item $\mathrm{KL}(f\|g)\;\le\;\frac{1}{\gamma_0}\int_0^1 (f(u)-g(u))^2\,du$.
    \item $\mathrm{KL}(f\|g)\;\le\;\frac{8\gamma_1}{\gamma_0}\,\hel^2(f,g)$
    \item $\|\cdot\|_2$ and $\hel$ are equivalent on $\mathcal D_{\gamma_0,\gamma_1}$ in the sense that
    \[
    \gamma_0 \int_0^1 \bigl(\sqrt{f(u)}-\sqrt{g(u)}\bigr)^2\,du
    \;\le\;
    \int_0^1 \bigl(f(u)-g(u)\bigr)^2\,du
    \;\le\;
    4\gamma_1 \int_0^1 \bigl(\sqrt{f(u)}-\sqrt{g(u)}\bigr)^2\,du.
    \]
\end{enumerate}
\end{lemma}

\begin{proof}
We prove each part separately.\\[0.5em]
\noindent\textbf{Proof of part 1.}\hspace{1em}
We write
\[
\mathrm{KL}(f\|g)=\int_0^1 f(x)\log\frac{f(x)}{g(x)}\,dx
=\int_0^1 f(x)\log\Bigl(1+\frac{f(x)-g(x)}{g(x)}\Bigr)\,dx.
\]
Since $f(x)\ge \gamma_0$, we have $\frac{f(x)-g(x)}{g(x)}>-1$, and hence $\log(1+u)\le u$ yields
\[
\mathrm{KL}(f\|g)\le\int_0^1 f(x)\frac{f(x)-g(x)}{g(x)}\,dx=\int_0^1 \frac{(f(x)-g(x))^2}{g(x)}\,dx+\int_0^1 (f(x)-g(x))\,dx.
\]
The last integral is zero because $f$ and $g$ are in $\mathcal{D}$, so
\[
\mathrm{KL}(f\|g)\le\int_0^1 \frac{(f(x)-g(x))^2}{g(x)}\,dx\le\frac{1}{\gamma_0}\int_0^1 (f(x)-g(x))^2\,dx,
\]
which proves the first bound.

\vspace{0.5em}
\noindent\textbf{Proof of part 2.}\hspace{1em}For the second bound, note that
\[
(f(x)-g(x))^2=(\sqrt{f(x)}-\sqrt{g(x)})^2(\sqrt{f(x)}+\sqrt{g(x)})^2.
\]
Since $f(x),g(x)\le \gamma_1$, we have $(\sqrt{f(x)}+\sqrt{g(x)})^2\le 4\gamma_1$, and thus
\[
\int_0^1 (f(x)-g(x))^2\,dx\le 4\gamma_1 \int_0^1 (\sqrt{f(x)}-\sqrt{g(x)})^2\,dx.
\]
Combining this with the first bound and $\hel^2(f,g)=\frac{1}{2}\int_0^1(\sqrt{f(x)}-\sqrt{g(x)})^2\,dx$
gives
\[
\mathrm{KL}(f\|g)\le\frac{1}{\gamma_0}\int_0^1 (f(x)-g(x))^2\,dx
\le\frac{4\gamma_1}{\gamma_0}\int_0^1(\sqrt{f(x)}-\sqrt{g(x)})^2\,dx=\frac{8\gamma_1}{\gamma_0}\hel^2(f,g).
\]

\vspace{0.5em}
\noindent\textbf{Proof of part 3.}\hspace{1em}Finally, using again $(f(x)-g(x))^2=(\sqrt{f(x)}-\sqrt{g(x)})^2(\sqrt{f(x)}+\sqrt{g(x)})^2$ and the bounds
\[
(\sqrt{f(x)}+\sqrt{g(x)})^2 \le 4\gamma_1,
\qquad
(\sqrt{f(x)}+\sqrt{g(x)})^2 \ge (\sqrt{f(x)})^2 \ge \gamma_0,
\]
we obtain
\[
\gamma_0\int_0^1(\sqrt{f(x)}-\sqrt{g(x)})^2\,dx
\le\int_0^1(f(x)-g(x))^2\,dx\le 4\gamma_1\int_0^1(\sqrt{f(x)}-\sqrt{g(x)})^2\,dx,
\]
which implies the displayed equivalence.
\end{proof}

\begin{lemma}\label{lem:mixability_of_logloss}
Let $g_1,\ldots,g_M\in \Dcal$ , and let $\pi_1,\ldots,\pi_M$ be prior weights
with $\pi_j>0$ for all $j$ and $\sum_{j=1}^M \pi_j=1$.
Given observations $x_1,\ldots,x_n$, define for each $t\in[n]$ the weighted
mixture
\[
\hat f_t(x)=\sum_{j=1}^M w_{t-1}(j)\,g_j(x),
\qquad
w_{t-1}(j)
=\frac{\pi_j \prod_{s=1}^{t-1} g_j(x_s)}
{\sum_{\ell=1}^M \pi_\ell \prod_{s=1}^{t-1} g_\ell(x_s)}.
\]
Then, for every $j\in\{1,\ldots,M\}$,
\[
\sum_{t=1}^n \log \hat f_t(x_t)\;\ge\;
\sum_{t=1}^n \log g_j(x_t)\;+\;\log\pi_j.
\]
In particular, if $\pi_j=1/M$ for all $j\in[M]$, then
\[
\sum_{t=1}^n \log \hat f_t(x_t)\;\ge\;\max_{1\le j\le M}\sum_{t=1}^n \log g_j(x_t)\;-\;\log M.
\]
\end{lemma}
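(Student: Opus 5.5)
The plan is the standard Bayesian-mixture telescoping argument for log-loss. I would define the unnormalized cumulative quantities
\[
Z_t:=\sum_{\ell=1}^M \pi_\ell \prod_{s=1}^{t} g_\ell(x_s),\qquad Z_0:=\sum_{\ell=1}^M\pi_\ell=1,
\]
so that the weights read $w_{t-1}(j)=\pi_j\prod_{s=1}^{t-1}g_j(x_s)/Z_{t-1}$. The first step is to check that the predictor at time $t$ is exactly a ratio of consecutive normalizers:
\[
\hat f_t(x_t)=\sum_{j=1}^M \frac{\pi_j\prod_{s=1}^{t-1}g_j(x_s)}{Z_{t-1}}\,g_j(x_t)=\frac{Z_t}{Z_{t-1}}.
\]

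The second step is to telescope the product over $t=1,\ldots,n$. This gives $\prod_{t=1}^n\hat f_t(x_t)=Z_n/Z_0=Z_n$, and hence
\[
\sum_{t=1}^n\log\hat f_t(x_t)=\log Z_n.
\]

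The third step lower bounds the sum defining $Z_n$ by any single one of its nonnegative terms. For every $j$ this yields $Z_n\ge \pi_j\prod_{s=1}^n g_j(x_s)$. Taking logarithms gives
\[
\sum_{t=1}^n\log\hat f_t(x_t)\ge\sum_{t=1}^n\log g_j(x_t)+\log\pi_j,
\]
which is the main claim. The special case follows by setting $\pi_j=1/M$ and maximizing over $j$.

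The only real obstacle is a degeneracy issue: some $Z_{t-1}$ might vanish, making the weights undefined. I would handle this with a case split.
\begin{itemize}
\item If $\prod_{s=1}^n g_j(x_s)=0$ for the chosen $j$, the right-hand side equals $-\infty$ and the inequality holds trivially, with the convention $\log 0=-\infty$.
\item Otherwise $g_j(x_s)>0$ for all $s\le n$. Then $Z_t\ge\pi_j\prod_{s\le t}g_j(x_s)>0$ for every $t\le n$, so all the ratios above are well defined.
\end{itemize}
With this case split in place, the remaining steps are routine algebra.
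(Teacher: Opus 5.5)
Your proposal is correct and follows essentially the same route as the paper: the normalizers $Z_t$, the identity $\hat f_t(x_t)=Z_t/Z_{t-1}$, telescoping to $\log Z_n$, and lower bounding $Z_n$ by its $j$-th term. Your case split for vanishing $Z_{t-1}$ is a small extra safeguard that the paper leaves implicit.
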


\begin{proof}
The result is standard; see, for instance, \citet[Section 9.2]{cesa2006prediction}. We include the argument here for completeness.

Define the normalizing constants
\[
Z_t:=\sum_{\ell=1}^M \pi_\ell \prod_{s=1}^{t} g_\ell(x_s),
\qquad
Z_0:=\sum_{\ell=1}^M \pi_\ell = 1.
\]
By the definition of $w_{t-1}$, for each $t\in[n]$,
\[
\hat f_t(x_t)=\sum_{j=1}^M w_{t-1}(j)\,g_j(x_t)
=\frac{\sum_{j=1}^M \pi_j\prod_{s=1}^{t-1}g_j(x_s)\,g_j(x_t)}{\sum_{\ell=1}^M \pi_\ell\prod_{s=1}^{t-1}g_\ell(x_s)}=\frac{Z_t}{Z_{t-1}}.
\]
Therefore,
\[
\sum_{t=1}^n \log \hat f_t(x_t)=\sum_{t=1}^n \bigl(\log Z_{t}-\log Z_{t-1}\bigr)=\log Z_n-\log Z_0
=\log Z_n.
\]
Since $Z_n=\sum_{\ell=1}^M \pi_\ell\prod_{s=1}^n g_\ell(x_s)\ge \pi_j\prod_{s=1}^n g_j(x_s)$,
we obtain
\[
\log Z_n\ge\log \pi_j +\sum_{t=1}^n \log g_j(x_t),
\]
which proves the first claim. The second follows by taking $\pi_j=1/M$ and minimizing over $j$.
\end{proof}

\end{document}